\newtcolorbox{mytextbox}[1][]{%
  sharp corners,
  enhanced,
  colback=white,
  height=10cm,
  attach title to upper,
  #1
}
\title{An Introduction to Kernel and Operator Learning Methods for Homogenization by Self-consistent Clustering Analysis}
\author{Owen Huang\textsuperscript {1}{\dag} \and
        Sourav Saha\textsuperscript {2} {\dag} \and
        Jiachen Guo\textsuperscript {2} {\dag} \and
        Wing Kam Liu\textsuperscript {2,3}{\ddag}
}
\institute{
    {\textsuperscript{1} Department of Mathematics, Princeton University, Washington Road, Princeton, 08540, New Jersey, USA.\\
    \textsuperscript{2} Theoretical and Applied Mechanics, Northwestern University, 2145 Sheridan Rd, Evanston, 60208, Illinois, USA.\\
    \textsuperscript{3} Department of Mechanical Engineering, Northwestern University, 2145 Sheridan Rd, Evanston, 60208, Illinois, USA.\\
    \textsuperscript{\dag} These authors contributed equally\\
    \textsuperscript{\ddag}Corresponding Author: Wing Kam Liu, %\at
	%2145 Sheridan Road, Tech B224, Evanston, IL, 60208-3109 \\
	%Tel.: +1-847-491-7094\\
	%Fax: +1-847-491-3915\\
	\email{w-liu@northwestern.edu}           %  \\
	%             \emph{Present address:} of F. Author  %  if needed
}
}
\begin{document}
	
\maketitle
\begin{abstract}
Recent advances in operator learning theory have improved our knowledge about learning maps between infinite dimensional spaces. However, for large-scale engineering problems such as concurrent multiscale simulation for mechanical properties, the training cost for the current operator learning methods is very high. The article presents a thorough analysis on the mathematical underpinnings of the operator learning paradigm and proposes a kernel learning method that maps between function spaces. We first provide a survey of modern kernel and operator learning theory, as well as discuss recent results and open problems. From there, the article presents an algorithm to how we can analytically approximate the piecewise constant functions on $\mathbb
{R}$ for operator learning. This implies the potential feasibility of success of neural operators on clustered functions. Finally, a $k$-means clustered domain on the basis of a mechanistic response is considered and the Lippmann-Schwinger equation for micro-mechanical homogenization is solved. The article briefly discusses the mathematics of previous kernel learning methods and some preliminary results with those methods. The proposed kernel operator learning method uses graph kernel networks to come up with a mechanistic reduced order method for multiscale homogenization.  
\end{abstract}

\keywords{Operator Learning \and Discrete Calculus \and Kernel Methods \and Lippmann Schwinger Equation \and Functional Analysis}

\section{Introduction}

The theory of neural networks have been studied extensively in the past century, and have been proven to be effective in learning highly nonlinear maps between finite-dimensional Euclidean spaces. Perhaps one of its most remarkable properties is the exhibition of a universal approximation theorem, which can be stated loosely as that any continuous function can be approximated to order $\epsilon$ with bounded depth. Equipped with this, and physical constraints, they have been used to efficiently solve many differential equations which show up in the natural sciences (ex. Physics-informed neural networks (PINN's) \cite{https://doi.org/10.48550/arxiv.2102.11802},\cite{RAISSI2019686}) Nonetheless, they are computationally and data expensive, and sensitive to the input and output domains, forcing a sense of rigidity in training.  This implies a fundamental disagreement with powerful macroscopic analysis techniques in materials engineering. The latter makes use of clustering techniques to capture the heterogeneous nature of microstructures. An example of a particular application of neural networks is solving instances of partial differential equations (PDE's). The subclass of problems when the underlying PDE is known has been studied extensively \cite{doi:10.1073/pnas.1718942115} and existing methods are able to solve small-scale problems efficiently (ex. constitutive laws \cite{DUBOS2020109629,li2019clustering}). Consider, for example, neural networks used for computational fluid dynamics (\cite{10.1145/2939672.2939738}), aerodynamic flow (\cite{Bhatnagar_2019}) and surrogate modelling (\cite{Zhu_2018}). Such methods require sufficient background knowledge regarding the governing physical laws, and are, by definition, discretization-dependent. However, in many scenarios, we do not have the liberty for such knowledge. We may not have enough prior knowledge to identify a closed form PDE (ex. many biological systems \cite{DBLP:journals/corr/abs-2108-08481}), and even if we can identify a nonlinear governing PDE, most mechanistic methods are too expensive when applied in the context of real engineering application. An appropriate example would be modeling heterogeneous materials with multiscale simulation. The established methods are either hierarchical \cite{vu2015multiscale,he2020hierarchical} or concurrent modeling \cite{gao2022concurrent,han2020efficient,yu2019multiresolution}. A significant volume of work have been done on multiresolution mechanics \cite{mcveigh2006multiresolution,mcveigh2008linking,mcveigh2009multiresolution,tian2010multiresolution}. To capture very critical aspects of the physics, strain or stress gradient methods are also proposed. All these methods have to make a compromise at some level regarding how much physics can be resolved and how many degrees of freedom are to be solved. More accurate models suffer from the need to solve a large number of degrees of freedom.   

To address these computational issues, reduced order models (ROM) are used frequently in engineering and scientific application. Some of the most famous ROMs are Proper Orthogonal Decomposition (POD) \cite{liberge2010reduced}, Proper Generalized Decomposition (PGD) \cite{dumon2011proper,lu2020adaptive,li2022stochastic}, and Transformation field analysis \cite{dvorak1992transformation}. This article will focus on a specific reduced order method called Self-consistent Clustering Analysis (SCA) \cite{liu2016self,yu2019self}. More details on the method will come later in the section 4.1. The idea of this method is to group the material point of a physical domain based some mechanical response and create a database of constitutive laws acting among these clusters. Later, based on the cluster deformation, the homogenized mechanical responses are computed in response to a macroscopic response. Such methods can drastically reduce the computational efforts of solving a PDE. However, some of these methods may lack generality or may require repetitive generation of the offline database.      

A relatively novel method to remedy this issue is generalization of reduced order models through neural networks or \textit{neural operators}. This device aims to learn the underlying operator between infinite dimensional Banach spaces $\Phi: \mathcal{A} \to  \mathcal{B}$, instead of just an instance of it, as is done with neural networks. In particular, we are interested in when $\mathcal{A}, \mathcal{B}$ are spaces of functions defined on a compact set $\Omega \subset \mathbb{R}^d$. This opens up entire new families of problems to be solved. The literature on neural operator theory is plentiful - certain architectures have already been shown experimentally to be orders of magnitude faster than conventional PDE solvers \cite{https://doi.org/10.48550/arxiv.2207.05209}. An example, which we will explore more in this paper is the Fourier Neural Operator (FNO's). This method introduces a "Fourier term" which leverages the convolution theorem to learn parameters in Fourier space. \cite{DBLP:journals/corr/abs-2010-08895}. Immediately, this is promising, because in the discrete setting, we can use the Fast Fourier Transform (FFT). Even more remarkably, it has been shown that FNO's possess a universal approximation theorem analagous to the well known theorem for neural networks \cite{https://doi.org/10.48550/arxiv.2107.07562}. As well, it enjoys invariance to discretization of domain. Unfortunately, the class of operators for which FNO's can universally approximate is too restrictive to be used alongside clustering methods used in engineering. It also turns out to be extremely data intensive during the training phase \cite{li2020fourier}. Another relevant architecture is the Graph Kernel Network (GKN), first introduced in \cite{DBLP:journals/corr/abs-2003-03485}. It is inspired by the Greens function method of solving differential equations, and thus includes an integral operator which is further parametrized by a shallow neural network. The GKN generalizes the FNO by decomposing the domain into a graph, then using a Monte Carlo sum to approximate an integral operator. Just like the FNO, the GKN is also faced with the curse of dimensionality as for large domains the training parameters of the graph neural network increases disproportionately making it less feasible for engineering application. However, these methods shows that the kernel to solve the PDEs ubiquitous in engineering and applied science can be \textit{learned} using deep neural network. The current research builds upon the premise that this \textit{kernel learning} approach can be extended to learn reduced order method such as SCA which will alleviate many of the current limitations of the method. 

This paper has three main objectives - the first is to offer a gentle but comprehensive plunge to the theory given working knowledge of introductory machine learning theory and real analysis. As we believe that operator learning theory is a fruitful domain which is still largely uncharted, it will be valuable to write in such a manner that experts and novices alike can access this paper. We also include all the relevant definitions and basic results as a crash course for future research. The second objective is an analysis of the most recent universal approximation theorem for FNO's, and how they relate to clustering methods. We will see that while they are not compatible, we can analytically represent clustering using differentiable functions and use those representations with FNO's. However, such an approximation may not be feasible or useful to implement, hence we search for a different approach. The paper's third goal is such an approach - an application of GKN's in materials sciences. SCA induces a natural graph structure on the domain, and we exploit this in the graph-construction phase of the GKN. We verify that this is consistent with the mathematical theory that underpins GKN's and then present numerical results in one dimension. 

\section{Kernel Theory}
In this section, we give a brief introduction to kernel learning, in particular motivating them, and the rigor behind the kernel trick. We will see that kernel learning is in some sense, ubiquitous in operator learning. 
\subsection{Motivation}
Kernel methods have been widely used in numerical methods. Among them, many are based on meshfree methods. Smoothed particle hydrodynamics (SPH) was first developed for hydrodynamics problems to solve astrophysical problems\cite{gingold1977smoothed}. The key idea of SPH is using kernel integration to approximate the field functions\cite{li2002meshfree}, as shown in Eq. \ref{SPH1}
\begin{equation}
 \langle f(x)\rangle=\int_{\Omega} f\left(x^{\prime}\right) \Phi\left(x-x^{\prime}, h\right) d x^{\prime} 
\label{SPH1}
\end{equation}
where $f(x)$ is the field function, $\Phi(x)$ is the so-called smoothing kernel function. Since $\Phi(x)$ is generally not the Dirac function, Eq. \ref{SPH1} is just an approximation of $f(x)$. In SPH, the continuous representation in Eq. \ref{SPH1} can be further approximated using discretized particles.
\\\\
Since SPH can have amplitude and phase errors and is deficient near the boundaries, Liu et. al \cite{liu1995reproducing,chen1996reproducing} proposed the reproducing kernel particle method (RKPM) to meet the completeness requirement by introducing a correction term in the kernel function. 
\begin{equation}
 \bar{\Phi}_{a}(x ; x-s)=C(x ; x-s) \Phi_{a}(x-s) 
\end{equation}
where $C(x ; x-s)$ is the correction term and can be expressed by a linear combination of polynomial basis functions.\\\\
Similarly, the kernel concept has been used to solve problems with discontinuity such as fractures. The peridyanmics method uses the integral formulation of continuum mechanics theory since classical derivative-based theory cannot handle crack surfaces and other forms of singularities\cite{silling2010peridynamic}. For example, the linear momentum balance can be written in the following integration form.
\begin{equation}
 \rho(x) \ddot{u}(x, t)=\int_{R} f\left(u\left(x^{\prime}, t\right)-u(x, t), x^{\prime}-x, x\right) d V_{x^{\prime}}+b(x, t) 
\end{equation}
where $t$ is time, $u$ is the displacement vector field, $\rho$  is the mass density in the undeformed body, $x'$ is a dummy variable of integration and $f$ is the pairwise force density (kernel function) that $x'$ exerts on $x$. \\

A more rudimentary but enlightening motivating example arises from regression. Consider the solution to an ordinary least squares (OLS) regression problem, with data set $\{(x_i,y_i)\}_{i=1}^N$ where $x_i \in \mathbb{R}^n$ and $y_i\in \mathbb{R}$ and optimal solution $y = (\beta^\star)^T x$. Recall that this has a closed form solution - namely $\beta^\star = (x^Tx)^{-1}x^Ty$. When we would like to upgrade the linear regression, to incorporate more linearities, one can instead introduce basis functions $\phi_i(x)$ instead (sometimes, these are also called features). Nonetheless, because the regression is still linear with respect to the features, the OLS solutions still hold, and we can obtain the optimal solution $\beta^\star = (\Phi^T\Phi)^{-1}\Phi^Ty$ where $\Phi$ is the matrix $(\Phi)_{ij} = \varphi_j(x_i)$. In reality, computing this matrix is very expensive, especially when the $\phi_j$ are extremely nonlinear. Fortunately, we can rewrite the optimal solution in another more versatile form - namely $\beta^\star = \Phi^T(\Phi\Phi^T)^{-1}y$. At first glance, this is almost identical - we still need to compute the $\Phi$ matrix, but we can actually exploit the properties of $K \coloneqq \Phi\Phi^T$ to bypass this. We only need to note that $K$ is symmetric and positive semidefinite, for it turns out that this is enough to use an important theorem to drastically reduce computation time. 
\subsection{Kernels, Mercer's Theorem and the Kernel Trick}
\begin{definition}
A function $K: \mathcal{X}\times \mathcal{X}\to \mathbb
{R}$ is a \textbf{kernel function} on a set $\mathcal{X}$ if it is symmetric: $K(x,y) = K(y,x)$ and positive semidefinite: 
\begin{equation}\label{eq:ker}
\sum_{j=1}^n\sum_{i=1}^na_ia_jK(x_i,x_j) \geq 0
\end{equation}
for every $a_1,\dots,a_n \in \mathbb{R}$ and $x_1,\dots,x_n \in \Omega$. 
\end{definition}
We remark that if the domain of $K$ satisfies $\mathcal{X} \subset \mathbb{R}^d$ compact, then (\ref{eq:ker}) simply becomes 
\begin{equation}
    \int_{\mathcal{X}\times \mathcal{X}}K(x,y)\varphi(x)\varphi_(y)dxdy \geq 0
\end{equation}
Kernels are generalizations of positive-semidefinite matrices from linear algebra. That being said, it is not surpising that every positive semidefinite matrix is a kernel! Indeed, (positive) linear combinations of kernels yields another kernel, as do products. Some nontrivial examples include the Gaussian kernel: $K(x,y)=\exp (\|x-y\|/2\sigma^2)$ and the Laplacian kernel: $K(x,y) = \exp(-\alpha\|x-y\|)$. Other examples include inner products on Hilbert spaces, and more importantly in deep learning theory, the Reproducing Kernel Hilbert Space (RKHS). Equipped with the definition of a kernel, we have from a theorem of Mercer \cite{mercer1909xvi} that kernels are rather easy to axiomatize. 
\begin{theorem}[Mercer]
    Suppose that $K: \mathcal{X}\times \mathcal{X}\to \mathbb
{R}$ is a kernel function, where $\mathcal{X} \subset \mathbb{R}^d$ compact. Then there is an orthonormal basis of $L^2(\mathcal{X}
)$, say $\{\varphi_i\}_{i=1}^\infty$ with nonnegative eigenvalues $\lambda_i$ so that 
\begin{equation}
    K(x,y) = \sum_{i=1}^\infty\lambda_i\varphi_i(x)\varphi_i(y)
\end{equation}
and the sum converges uniformly and absolutely. 
\end{theorem}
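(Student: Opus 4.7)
The plan is to promote $K$ to an integral operator on $L^2(\mathcal{X})$, apply the spectral theorem for compact self-adjoint operators to produce the eigenbasis, and then upgrade the resulting $L^2$-expansion to a uniformly and absolutely convergent series. First I would define $T_K : L^2(\mathcal{X}) \to L^2(\mathcal{X})$ by
\begin{equation}
    (T_K f)(x) = \int_{\mathcal{X}} K(x,y)\, f(y)\, dy.
\end{equation}
Since the uniform-convergence conclusion forces $K$ to be continuous, I treat continuity of $K$ as an implicit hypothesis; combined with compactness of $\mathcal{X}$, this makes $K$ bounded and square-integrable on $\mathcal{X}\times\mathcal{X}$, so $T_K$ is Hilbert--Schmidt and hence compact. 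Symmetry of $K$ translates into self-adjointness of $T_K$, and the positive semi-definiteness hypothesis (in the integral form stated immediately after \eqref{eq:ker}) yields $\langle T_K f, f\rangle \ge 0$ for all $f \in L^2(\mathcal{X})$.

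Next I would apply the spectral theorem to $T_K$ on the separable Hilbert space $L^2(\mathcal{X})$, producing an orthonormal basis $\{\varphi_i\}_{i=1}^\infty$ of eigenfunctions with $T_K\varphi_i = \lambda_i\varphi_i$, $\lambda_i \to 0$, and $\lambda_i \ge 0$ by positivity. A bootstrap using the identity $\varphi_i = \lambda_i^{-1} T_K\varphi_i$ for $\lambda_i > 0$ shows each nontrivial $\varphi_i$ admits a continuous representative, since continuity of $K$ makes $T_K$ send $L^2(\mathcal{X})$ into $C(\mathcal{X})$. Setting $K_N(x,y) := \sum_{i=1}^N \lambda_i\varphi_i(x)\varphi_i(y)$, Parseval in $L^2(\mathcal{X}\times\mathcal{X})$ applied to the product orthonormal basis $\{\varphi_i \otimes \varphi_j\}_{i,j}$ yields $K_N \to K$ in $L^2$.

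The main obstacle is upgrading this $L^2$-convergence to uniform and absolute convergence on all of $\mathcal{X}\times\mathcal{X}$, which is the delicate heart of Mercer's theorem. The strategy is to control the diagonal first. The remainder $R_N := K - K_N$ is itself a continuous symmetric positive semi-definite kernel (it is the integral kernel of the compression of $T_K$ to the orthogonal complement of $\mathrm{span}\{\varphi_1,\ldots,\varphi_N\}$), so by pointwise nonnegativity of the diagonal of a continuous PSD kernel, $R_N(x,x) \ge 0$; hence $S_N(x) := \sum_{i\le N}\lambda_i\varphi_i(x)^2 \le K(x,x)$ with $S_N$ monotone increasing in $N$. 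Combining this with the trace-class identity $\sum_i \lambda_i = \int K(x,x)\,dx$ and the reproducing-kernel identity $K(x_0,x_0) = \sum_i \lambda_i \varphi_i(x_0)^2$ (obtained from $\|\mathrm{ev}_{x_0}\|_{H_K}^2 = K(x_0,x_0)$ in the RKHS associated with $K$) gives pointwise convergence $S_N(x) \to K(x,x)$ for every $x$. Dini's theorem then upgrades this to uniform convergence on the compact diagonal. Finally, the Cauchy--Schwarz bound
\begin{equation}
    \Bigl|\sum_{N<i\le M}\lambda_i\varphi_i(x)\varphi_i(y)\Bigr|^2 \le \bigl(S_M(x) - S_N(x)\bigr)\bigl(S_M(y) - S_N(y)\bigr),
\end{equation}
combined with uniform diagonal control, shows $\{K_N\}$ is uniformly Cauchy on $\mathcal{X}\times\mathcal{X}$; the continuous uniform limit then coincides with $K$ by the $L^2$ identification. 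Absolute convergence at each $(x,y)$ follows from the AM--GM bound $|\lambda_i\varphi_i(x)\varphi_i(y)| \le \tfrac{1}{2}\lambda_i(\varphi_i(x)^2 + \varphi_i(y)^2)$, whose right-hand side sums to $\tfrac{1}{2}(K(x,x)+K(y,y)) < \infty$.
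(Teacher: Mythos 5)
The paper itself gives no proof of this statement---it is quoted from Mercer's original paper \cite{mercer1909xvi}---so your proposal can only be measured against the standard argument, whose overall architecture you reproduce correctly: integral operator, spectral theorem for a compact positive self-adjoint operator, continuity of the eigenfunctions with $\lambda_i>0$ via $\varphi_i=\lambda_i^{-1}T_K\varphi_i$, the diagonal bound $S_N(x)\le K(x,x)$ from positivity of the remainder kernel, Dini's theorem, the Cauchy--Schwarz estimate for the off-diagonal tails, and the AM--GM bound for absolute convergence. You are also right that continuity of $K$ must be added to the hypotheses as stated (a uniformly convergent series of continuous terms forces it), which the paper omits.

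The genuine gap is the step where you claim pointwise convergence on the diagonal, $S_N(x_0)\to K(x_0,x_0)$ for \emph{every} $x_0$. You justify it by invoking the trace identity $\sum_i\lambda_i=\int K(x,x)\,dx$ and the identity $K(x_0,x_0)=\sum_i\lambda_i\varphi_i(x_0)^2$ ``from the RKHS''. But the second identity \emph{is} the pointwise diagonal statement you are trying to prove; deriving it from $\Vert \mathrm{ev}_{x_0}\Vert_{H_K}^2=K(x_0,x_0)$ requires knowing that $\{\sqrt{\lambda_i}\,\varphi_i\}$ is an orthonormal basis of $H_K$, a fact whose standard proofs go through Mercer's theorem itself, and the trace identity is likewise a corollary of Mercer rather than an available input (a priori one only gets $\sum_i\lambda_i\le\int K(x,x)\,dx$ by monotone convergence, and even granting equality this yields $S_\infty=K(\cdot,\cdot)$ only almost everywhere, which is not enough for Dini). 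The standard way to close this hole, with the tools you already have on the table: fix $x$, note $\sum_{i=N}^{M}|\lambda_i\varphi_i(x)\varphi_i(y)|\le\bigl(\sum_{i=N}^{M}\lambda_i\varphi_i(x)^2\bigr)^{1/2}\sup_{y}K(y,y)^{1/2}$, so the series $\sum_i\lambda_i\varphi_i(x)\varphi_i(y)$ converges absolutely and uniformly in $y$ to a continuous function $\widetilde K(x,\cdot)$; since $\langle K(x,\cdot),\varphi_i\rangle=\lambda_i\varphi_i(x)$ for all $i$ (including $\lambda_i=0$), $\widetilde K(x,\cdot)$ and $K(x,\cdot)$ have the same coefficients in the complete basis, hence agree a.e., hence everywhere by continuity in $y$; taking $y=x$ gives the pointwise diagonal identity at every point, after which your Dini and Cauchy--Schwarz steps go through unchanged.
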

and we have the following as a corollary,
\begin{corollary}
    $K:\mathcal{X}\times \mathcal{X}\to \mathbb
{R}$ is a kernel function if and only if $K$ can be represented as an inner product $K(x,y)= \langle\varphi(x),\varphi(y)\rangle_{\mathcal{H}}$ for some $\varphi$ and Hilbert space $\mathcal{H}$.
\end{corollary}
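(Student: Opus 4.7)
The plan is to prove the two directions separately, using Mercer's theorem for the harder ($\Rightarrow$) direction and a one-line bilinearity argument for the easier ($\Leftarrow$) direction.

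For the forward direction, I would start from the Mercer expansion $K(x,y) = \sum_{i=1}^\infty \lambda_i \varphi_i(x)\varphi_i(y)$ and read off the feature map directly. Let $\mathcal{H} = \ell^2(\mathbb{N})$ with its standard inner product, and define
\begin{equation}
    \varphi: \mathcal{X} \to \ell^2, \qquad \varphi(x) = \bigl(\sqrt{\lambda_i}\,\varphi_i(x)\bigr)_{i=1}^\infty.
\end{equation}
The first task is to check that $\varphi(x)$ actually lies in $\ell^2$. This follows because, by Mercer's theorem applied on the diagonal, $\sum_{i=1}^\infty \lambda_i \varphi_i(x)^2 = K(x,x)$, which is finite (and in fact continuous, as the convergence is uniform). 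Once $\varphi$ is well defined, the inner product on $\ell^2$ reproduces $K$ termwise:
\begin{equation}
    \langle \varphi(x), \varphi(y)\rangle_{\ell^2} = \sum_{i=1}^\infty \sqrt{\lambda_i}\varphi_i(x) \sqrt{\lambda_i}\varphi_i(y) = K(x,y),
\end{equation}
with the rearrangement/convergence justified by the absolute convergence part of Mercer's theorem.

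The reverse direction is essentially formal. Given $K(x,y) = \langle \varphi(x), \varphi(y)\rangle_{\mathcal{H}}$, symmetry follows from symmetry of the inner product, and for any $a_1,\dots,a_n \in \mathbb{R}$ and $x_1,\dots,x_n \in \mathcal{X}$, bilinearity yields
\begin{equation}
    \sum_{i,j=1}^n a_i a_j K(x_i,x_j) = \Bigl\| \sum_{i=1}^n a_i \varphi(x_i) \Bigr\|_{\mathcal{H}}^2 \geq 0,
\end{equation}
which is exactly the positive semidefiniteness condition in the definition of a kernel.

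The only nontrivial step is the pointwise summability $\sum \lambda_i \varphi_i(x)^2 < \infty$ needed to place $\varphi(x)$ in $\ell^2$; this is precisely what the uniform convergence clause of Mercer's theorem buys us, so no extra work is required beyond invoking it. Everything else is bookkeeping about inner products, so I do not expect any serious obstacle once Mercer's theorem is in hand.
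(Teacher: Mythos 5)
Your proof is correct and follows exactly the route the paper intends: the paper states this result as an immediate corollary of Mercer's theorem without writing out details, and your argument (the $\ell^2$ feature map $x \mapsto (\sqrt{\lambda_i}\,\varphi_i(x))_i$ with membership in $\ell^2$ guaranteed by the diagonal identity $K(x,x)=\sum_i \lambda_i\varphi_i(x)^2$, plus the bilinearity/norm computation for the converse) is precisely the standard filling-in of that corollary. The only implicit point worth keeping in mind is that the forward direction inherits Mercer's hypotheses ($\mathcal{X}\subset\mathbb{R}^d$ compact), which matches the setting of the paper.
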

If we interpret the $\varphi_i$ as learnable features, one might interpret the matrix $\Phi$ as a map into some feature space. Then we can think of the above theorems as the statement that \textit{every kernel corresponds to a feature mapping}. Moreover, applying this directly to $\Phi\Phi^T$ matrix in section 2.1, we can rewrite
\begin{equation}
    \Phi\Phi^T = \begin{bmatrix}
\varphi(x_1)^T\varphi(x_1) & \hdots & \varphi(x_1)^T\varphi(x_N) \\
\vdots & \ddots & \hdots \\
\varphi(x_N)^T\varphi(x_1) & \vdots & \varphi(x_N)^T\varphi(x_N)
\end{bmatrix}=\begin{bmatrix}
K(x_1,x_1) & \hdots & K(x_1,x_N)\\
\vdots & \ddots & \hdots \\
K(x_N,x_1) & \vdots & K(x_N,x_N)
\end{bmatrix} 
\end{equation}
for some kernel function $K$. This is called the kernel trick, and allows us to bypass computing a highly nonlinear $\varphi$ by passing to a kernel function. We remark that $K$ need not be unique! It turns out that the idea of swapping a (presumably sophisticated) inner product for a kernel to be learned via deep learning is the basis for \textit{kernel learning theory}. However, in the present literature, some \textit{kernels} need not satisfy Definition 1 - they just keep the namesake. Consider for example, a numerical method for computing Greens functions. In machine learning, the notion of introducing a kernel functions is ubiquitous, as we can exploit its algebraic properties to reduce computational energy. Indeed, they are used extensively for testing independence between random variables \cite{smola2007hilbert}, dimensionality reduction \cite{rahimi2007random} and even genomic data analysis \cite{wang2015kernel}. In the next chapter, we explore of its applications in operator learning. 
\section{Neural Operator Theory}
In the previous section, we surveyed basically what is the mathematical motivation for the major neural operator architectures we discuss next. In this section, we will develop the mathematical foundation of neural operators. All of the preliminaries needed for this section will be found in the appendix. Then, we will motivate and construct the GKN, from which it will be obvious that the FNO is a special case. To understand what we mean by learning operators, we'll first have to rigorously formulate what neural networks really do. Here, we'll follow the characterization of neural networks adapted from \cite{roberts2021principles}.

\subsection{Basic Theory of and Constructions of Neural Operators}
Traditionally, neural networks seek to learn a mapping between finite-dimensional spaces:  $\Phi:\mathbb{R}^n\to \mathbb{R}^k$. Given i.i.d. samples (with respect to some probability measure) $\{x_i,y_i\}_{i=1}^N$ with $x_i \in \mathbb{R}^n$ and $y_i = \Phi(x_i)$ (possibly noisy observations), we aim to construct a parametrized function $\Phi':\mathbb{R}^n \times \Theta \to \mathbb{R}^k$ such that for some $\theta^\star \in \Theta \subseteq \mathbb
{R}^p$, 
\begin{equation*}
    \Phi'(x, \theta^\star) \approx \Phi(x)
\end{equation*}
More formally, 
\begin{definition}
A \textbf{Neural Network} $\mathcal{N}$ is a function $\mathcal{N}:\mathbb{R}^n \to \mathbb{R}^k$ which admits a decomposition
\begin{equation}
    \mathcal{N} = Q \circ \mathcal{L}_L\circ \mathcal{L}_{L-1} \circ \dots \circ \mathcal{L}_1 \circ P
\end{equation}
with $P$ a $d \times n$ matrix called the \textbf{encoder} and $Q$ a $k \times d$ matrix called the \textbf{decoder}. We'll note that sometimes, this is called an \textit{encoder-decoder NN}. We chose this definition in order to demonstrate similarites with neural operators. Indeed, to get the more generic neural network form, one can simply take $P$ and $Q$ to be matrices with 1's on the diagonal. Each $\mathcal{L}_j: \mathbb{R}^{d_{j-1}}\to \mathbb{R}^{d_j}$ is a \textbf{layer} is of the form 
\begin{equation}
    \mathcal{L}_j(v) = \sigma(W_jv+b_jv)
\end{equation}
for a $d_j \times d_{j-1}$ matrix $W_j$, scalar $b_j$ and \textbf{activation function} $\sigma: \mathbb{R}\to \mathbb{R}$ (taken to be non-affine and usually non-polynomial). There are $L$ layers.
\end{definition}
The remarkable result concerning neural networks may be summarized in the \textit{Universal Approximation Theorem for Neural Networks} where the most modern formulation is as follows \cite{DBLP:journals/corr/abs-1905-08539}:
\begin{theorem}[Kidger, Lyons 2020]
Let $f: \Omega \to \mathbb{R}^k$ be in $C^0(\Omega; \mathbb{R}^k)$ for a compact subset $\Omega \subset \mathbb{R}^n$. Let $\mathcal{N}^\sigma(n,k,L)$ denote the space of all neural networks with inputs in $\mathbb{R}^n$, outputs in $\mathbb{R}^k$ with an arbitrary number of hidden layers, each with $L$ neurons and non-affine activation function $\sigma:\mathbb{R}\to \mathbb{R}$. Then for every $\epsilon>0$, there exists a element $f^\star \in \mathcal{N}^\sigma(n,k,n+k+2)$ such that 
\begin{equation}
    \sup_{x \in \Omega}\Vert f(x)-f^\star(x)\Vert < \epsilon
\end{equation}
\end{theorem}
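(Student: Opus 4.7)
The plan is to bootstrap from the classical shallow universal approximation theorem (Cybenko/Hornik/Pinkus), which gives approximation by a single hidden layer of arbitrary width, and then convert that wide-shallow approximation into a narrow-deep one of width $n+k+2$. The overall strategy is a ``register'' construction: at each stage of the network, $n$ neurons carry the input $x$ forward essentially unchanged, $k$ neurons accumulate partial sums of the target output $f(x)$, and the remaining two neurons are the ``scratch'' where one additional ridge-function term is computed and added to the accumulator before being discarded. Iterating this block many times converts a long shallow sum $\sum_{j=1}^{M} c_j\,\sigma(w_j^T x + b_j)$ into a composition of many narrow layers producing the same output.

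The first step I would carry out is to invoke the classical theorem to reduce the problem coordinatewise: for each component $f_i \in C^0(\Omega;\mathbb{R})$ find constants $c_{i,j},w_{i,j},b_{i,j}$ such that
\begin{equation}
    \Bigl\| f_i(x) - \sum_{j=1}^{M_i} c_{i,j}\,\sigma(w_{i,j}^T x + b_{i,j}) \Bigr\|_{\infty,\Omega} < \epsilon/(2k).
\end{equation}
Taking a common summation length $M$ reduces the problem to implementing a fixed finite sum of ridge functions by a narrow deep architecture.

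The second (and central) step is the \textbf{identity-approximation lemma}: because $\sigma$ is non-affine, there is a point $t_0$ where $\sigma$ is differentiable with a nontrivial second-order behavior, so finite linear combinations of translations $\sigma(\alpha x + \beta)$ can approximate the identity map $x \mapsto x$ uniformly on any compact interval to arbitrary precision. This is the ``propagation channel'' that lets the $n$ register coordinates survive arbitrarily many layers with controlled error, and similarly lets the $k$ accumulator coordinates pass through unchanged while new terms are added. I would then make the block explicit: given the state $(x, s, *, *) \in \mathbb{R}^{n}\times\mathbb{R}^{k}\times\mathbb{R}^{2}$ after some layer, the next block uses the two scratch neurons to compute $\sigma(w_j^T x + b_j)$ (and, if needed, a second copy to cancel an offset), uses the identity approximation to copy $x$ into the next layer's register, and writes $s + c_j\,e_{i(j)}\,\sigma(w_j^T x+b_j)$ into the accumulator, again via the identity approximation. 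Stacking $M$ such blocks produces the target sum in the accumulator; the decoder $Q$ then projects onto the $k$ accumulator coordinates, and the encoder $P$ embeds the input into the initial state.

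The main obstacle is the propagation of error: the identity approximation is never exact, so after $L$ layers a naive analysis gives error growing like $L$ times a per-layer approximation error. I would control this by first fixing the target sum, which determines a compact ``reachable'' set in $\mathbb{R}^{n+k+2}$, and then choosing the per-layer identity approximation accurate enough on that compact set so that the total uniform error is below $\epsilon/2$; combined with the $\epsilon/2$ bound from the shallow UAT step, this gives the desired $\epsilon$. A subsidiary technical point is verifying that exactly two scratch neurons suffice to compute a single ridge term and write it into the accumulator (rather than three or more), which is where the sharp width count $n+k+2$ in Kidger--Lyons comes from and where I expect to have to do the most careful bookkeeping; depending on whether $\sigma$ is only assumed continuous or has extra regularity, one may need a small auxiliary construction approximating the identity by a pair of shifted activations whose leading linear parts cancel their nonlinearities up to higher-order terms.
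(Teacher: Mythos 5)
The paper does not actually prove this statement: Theorem~2 is quoted from Kidger--Lyons with a citation only, so there is no in-paper argument to compare against. Your proposal is, in substance, the strategy of the original Kidger--Lyons proof: their ``register model'' reserves $n$ channels to carry $x$ forward, $k$ channels to accumulate the output, and a couple of extra channels to evaluate one ridge term per block, with the classical shallow universal approximation theorem supplying the finite sum $\sum_j c_j\,\sigma(w_j^T x+b_j)$ to be emulated and an identity-approximation lemma letting the register channels survive depth. Your error bookkeeping (fix the number of blocks first, then tune the per-layer identity approximation on a compact reachable set so the composed error stays below $\epsilon/2$) is also the right way to control the depth-wise error, and the coordinatewise reduction with a common $M$ is unproblematic.

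One step as written is wrong, however, and it is the load-bearing one. Non-affineness of a continuous $\sigma$ does \emph{not} give you a point $t_0$ where $\sigma$ is differentiable (let alone with ``nontrivial second-order behavior''): continuous nowhere-differentiable functions are certainly non-affine. The identity approximation you rely on, essentially $x \approx \bigl(\sigma(\alpha x+\beta)-\sigma(\beta)\bigr)/\bigl(\alpha\,\sigma'(\beta)\bigr)$ for small $\alpha$, needs $\sigma$ to be (continuously) differentiable at some point $\beta$ with $\sigma'(\beta)\neq 0$; this is precisely the extra hypothesis in the actual Kidger--Lyons theorem for width $n+k+2$, and the statement as transcribed in this paper drops it. Your sketch silently tries to re-derive that hypothesis from non-affineness, which cannot be done. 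So either add the differentiability-at-a-point hypothesis (matching the original theorem, and what the ``propagation channel'' genuinely requires), or replace the identity-approximation step by a different construction adapted to merely continuous activations; as it stands the lemma underpinning the register channels has a genuine gap, while the rest of the block construction and the width count are sound modulo that repair. (A minor further point: only first-order behavior, $\sigma'(\beta)\neq 0$, is needed for the identity; second-order behavior enters only if you also want to synthesize products, which this argument does not require.)
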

We remark that while the theorem affirms the existence of a "good" neural network, actually finding the network is often intractable. Moreover, such a function need not be efficient. Indeed, it is the case that generally, a very accurate neural network is \textit{not} fast! Consider a traditional image classification problem solved by a Convolutional Neural Network (CNN). The solution space is intimately tied to the input resolution. Any attempt to resize the input image would require some sort of graphical preprocessing (and the classifier still need not work). A network which has prediction error independent of input resolution is called \textit{discretization-invariant}, and we will explore this more rigorously in this section. \\\\
On what seems to be a completely unrelated objective, that of efficiently learning PDE operators, we explore a very natural idea to generalize neural networks to learn mappings between infinite dimensional spaces. Remarkably, it turns out that this method actually gives us discretization-invariance!
More formally, consider an operator between Banach spaces $\Phi:\mathcal{A} \to \mathcal{B}$ (usually function spaces). Given i.i.d samples $\{f_i,g_i\}_{i=1}^N$, with $g(i) = \Phi(f_i)$ possibly corrupted with noise and $\epsilon>0$, we search for an operator $\Phi^\star: \mathcal{A} \to \mathcal{B}$ such that for a compact subset $\Omega \subset \mathcal{A}$, 
\begin{equation}
    \sup_{f\in \Omega}\Vert\Phi f-\Phi^\star f\Vert_\mathcal{K}< \epsilon
\end{equation}
For sake of concreteness, take $\mathcal{A} = H^s(D; \mathbb{R}^n)$ and $\mathcal{B} = H^s(D; \mathbb{R}^k)$ for a compact subset (usually a rectangular region) $D \subset \mathbb{R}^u$. \\\\
A \textbf{neural operator} parallels neural networks for learning operators. Formally, a neural operator is a map $\mathcal{N}$ from $H^s(D; \mathbb{R}^n)$ to $H^s(D; \mathbb{R}^k)$ of the form: 
\begin{equation}\label{eq:a2}
    \mathcal{N} = \mathcal{Q} \circ \mathcal{L}_L\circ \mathcal{L}_{L-1} \circ \dots \circ \mathcal{L}_1 \circ \mathcal{P}
\end{equation}
with the \textbf{lifting} linear operator $\mathcal{P}: H^s(D; \mathbb{R}^n) \to H^s(D; \mathbb{R}^d)$ be given by 
\begin{equation}
    \mathcal{P}[f](x) = Pf(x)
\end{equation}
for a $d \times n$ matrix $P$. $\mathcal{Q}: H^s(D; \mathbb{R}^d) \to H^s(D; \mathbb{R}^k)$ is the \textbf{projection} operator, and is given by 
\begin{equation}
    \mathcal{Q}[f](x) = Qf(x)
\end{equation} 
for a $k \times d $ matrix $Q$. The layers $\mathcal{L}_j: H^s(D; \mathbb{R}^{d_{j-1}}) \to H^s(D; \mathbb{R}^{d_j})$ comprise of weights and biases and activation functions, just as their neural network counterparts. However, there is often added structure, such as the addition of an integral operator.\\\\
We would like to highlight two particular major properties that neural operators enjoy: discretization-invariance and universal approximation. Let's make that rigorous, inspired by the framework in \cite{DBLP:journals/corr/abs-2108-08481}.
\begin{definition} 
Let $D \subset \mathbb{R}^u$ be compact. Let $\{D_i\}_{i=1}^\infty$ be a nested sequence of sets so that each $D_j$ contains $j$ points. Such a sequence is a \textbf{refinement} of $D$ if for every $\epsilon >0$, there is an $L$ so that any point in $D$ is no more than $\epsilon$ away from some point of $D_L$ (using Euclidean distance). Symbolically, this is equivalent to that we can find a $D_L$ for which 
\begin{equation}
    D \subseteq \bigcup_{x\in D_L}B(x,\epsilon)
\end{equation}
For any refinement, we call each $D_j$ a $j$-\textbf{point discretization} of $D$. 
\end{definition}
Now, with an $j$ - point discretization and a function $f:D\to \mathbb{R}^n$, define $(D_j,f(D_j) \coloneqq \{d,f(d)\}_{d\in D_j}$ be the pointwise evalutations of the function $f$ on the points in the discretization. Equipped with this characterization, we would like to define what it means for an learned operator to be discretization invariant. 
\begin{definition} 
    Let $\Phi: H^s(D; \mathbb{R}^n)\to H^s(D; \mathbb{R}^d)$ be an operator and suppose that $D$ admits a refinement$\{D_j\}_{j=1}^\infty$. We say that $\Phi$ is \textbf{discretization-invariant} if we can find maps $\widetilde{\Phi}_j:\mathbb{R}^n
    \times \mathbb{R}^d \to H^s(D; \mathbb{R}^d)$ for any compact $K \subset H^s(D; \mathbb{R}^n) $, 
    \begin{equation}\label{eq:991}
        \lim_{j\to \infty} \sup_{f\in K} \|\widetilde{\Phi}_j(D_j, f(D_j)) - \Phi f\|_{H^s} = 0 
    \end{equation}
\end{definition}
We remark that we can alter this definition to include parametrized operators $\Phi$, but for the sake of highlighting relevant topics, this is omitted. This is an attractive property to enjoy, as we will see numerically, because it allows for flexibility in choosing mesh size/geometry as well as theoretically reducing computation time (ie. using a low-resolution model to use in high-resolution computations). Fortunately, certain neural operator architectures satisfy discretization-invariance, the statement and proof of which we will defer to \cite{DBLP:journals/corr/abs-2108-08481}. 
It is also remarkable that a universal approximation theorem similar to Theorem 1 can be achieved. Put more formally, we see from \cite{392253}
\begin{theorem}[Chen,Chen 1995]\label{eq:991}
Let $D \subset \mathbb{R}^n$ be compact and $K \subset C^0(D, \mathbb{R})$ be compact too. Suppose that $V \subset \mathbb{R}^d$ is compact and we have the continuous operator $\Phi: K \to C^0(V, \mathbb{R})$. If $\sigma$ is not a polynomial (ex. $\textrm{ReLU}$), then for every $\epsilon>0$, there exists a wide one layer neural operator $\Phi^\star$ so that for every $f \in K$
\begin{equation}
    | \Phi f-\Phi^\star f| < \epsilon
\end{equation}
\end{theorem}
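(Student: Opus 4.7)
The plan is to reduce the infinite-dimensional operator approximation problem to a finite-dimensional neural-network approximation problem by exploiting the compactness of $K$, and then to repackage the result into the one-layer neural-operator form of equation~(\ref{eq:a2}).

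First I would use Arzel\`a–Ascoli: since $K \subset C^0(D,\mathbb{R})$ is compact, it is equicontinuous and uniformly bounded. Fix a target error $\epsilon>0$, and choose a finite $\delta$-net $\{x_1,\dots,x_m\} \subset D$ with $\delta$ chosen small enough (using equicontinuity) that any two $f,g \in K$ with $|f(x_j)-g(x_j)| \le \delta$ for all $j$ satisfy $\|f-g\|_\infty < \epsilon/(3L_\Phi)$, where $L_\Phi$ is a modulus of continuity for $\Phi$ on $K$. Define the sampling map $S:K \to \mathbb{R}^m$ by $S(f)=(f(x_1),\dots,f(x_m))$. Its image $S(K)$ is a compact subset of $\mathbb{R}^m$, and by the reasoning above $\Phi$ factors (up to $\epsilon/3$ error) through $S$: there exists a continuous map $\widetilde{\Phi}:S(K)\to C^0(V,\mathbb{R})$ with $\|\Phi f - \widetilde{\Phi}(S(f))\|_\infty < \epsilon/3$ for all $f \in K$.

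Next I would view $\widetilde{\Phi}$ as a continuous map $S(K)\times V \to \mathbb{R}$, which is a continuous function on a compact subset of $\mathbb{R}^{m+d}$. By Theorem 1 (the classical Kidger–Lyons / Cybenko-style universal approximation theorem for non-polynomial $\sigma$), there exist coefficients $\{c_k, w_{kj}, \alpha_{k\ell}, b_k\}$ and a width $N$ so that
\begin{equation}
\left| \widetilde{\Phi}(S(f))(y) - \sum_{k=1}^N c_k\,\sigma\!\Bigl(\sum_{j=1}^m w_{kj} f(x_j) + \sum_{\ell=1}^d \alpha_{k\ell} y_\ell + b_k\Bigr) \right| < \frac{\epsilon}{3}
\end{equation}
uniformly over $f\in K$ and $y\in V$. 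Combined with the previous step, the triangle inequality gives $\sup_{f\in K,\, y\in V}|(\Phi f)(y) - \Phi^\star f(y)| < \epsilon$, where $\Phi^\star f$ denotes the right-hand side of the display above.

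The remaining step is to identify $\Phi^\star$ with a one-layer neural operator in the sense of equation~(\ref{eq:a2}). The lifting $\mathcal{P}$ encodes $f(y)$ together with the coordinate $y$ (and constant channels) as a vector-valued function on $D$; the single layer $\mathcal{L}_1$ bundles the $m$ point-evaluations into the affine combination $\sum_j w_{kj}f(x_j)$ — implemented as an integral operator against a sum of Dirac-like kernels (or their smooth approximations, with the continuity of $\widetilde{\Phi}$ absorbing the approximation error) — adds the pointwise affine term in $y$, and applies $\sigma$; the projection $\mathcal{Q}$ implements the readout weights $c_k$. The main obstacle is precisely this last bookkeeping step: rewriting a sum of neural-network outputs depending jointly on samples of $f$ and on $y$ as a genuine composition $\mathcal{Q}\circ\mathcal{L}_1\circ\mathcal{P}$ in the architecture defined in Section~3.1, and handling the fact that strict point-evaluation kernels are not in $H^s$ — which forces either mollification (and a further small $\epsilon/N$ perturbation argument using continuity of $\sigma$) or a slight relaxation of the integral-kernel model. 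Once this packaging is done, uniformity in $f$ and $y$ yields the desired inequality.
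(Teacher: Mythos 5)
The paper never proves this statement: Theorem 3 is quoted from Chen and Chen \cite{392253} and used as a black box, so there is no internal proof to compare against. Your sketch reconstructs what is essentially the original Chen--Chen route: compactness of $K$ plus Arzel\`a--Ascoli gives a finite sampling set on which every $f\in K$ is determined up to a uniformly small sup-norm error, the operator is thereby approximately factored through a compact subset of $\mathbb{R}^m$, the classical finite-dimensional universal approximation theorem for non-polynomial $\sigma$ is applied to the function of $(f(x_1),\dots,f(x_m),y)$ on $S(K)\times V$, and a triangle inequality finishes. That is the right strategy and the analytic core is sound (one cosmetic point: $\Phi$ is only continuous, hence uniformly continuous on the compact $K$, so the quantity you call $L_\Phi$ should be a modulus of continuity, i.e.\ you should pick $\delta'$ with $\|f-g\|_\infty<\delta'\Rightarrow\|\Phi f-\Phi g\|_\infty<\epsilon/3$, rather than a Lipschitz constant).

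Two steps are genuinely thinner than the rest. First, the continuous map $\widetilde{\Phi}:S(K)\to C^0(V,\mathbb{R})$ is asserted, not constructed: $S$ need not be injective on $K$, so ``$\Phi$ factors through $S$ up to $\epsilon/3$'' does not by itself yield a well-defined continuous function on $S(K)$. This is fixable, e.g.\ take a partition of unity $\{\psi_i\}$ subordinate to a fine cover of $S(K)$, choose $f_i\in K$ with $S(f_i)$ in the $i$-th patch, set $\widetilde{\Phi}(v)=\sum_i\psi_i(v)\,\Phi f_i$, and use your uniform-continuity estimate to control $\|\Phi f-\widetilde{\Phi}(S(f))\|_\infty$; alternatively extend $\Phi$ continuously off $K$ (Dugundji) and compose with a continuous reconstruction of the samples. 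Second, the final repackaging, which you yourself flag as the main obstacle, is indeed not routine in this paper's formalism: the neural operator of Definition 3 maps functions on $D$ to functions on the \emph{same} $D$, whereas $\Phi$ maps $C^0(D,\mathbb{R})$ to $C^0(V,\mathbb{R})$, and strict point evaluations are not integral kernels (though for $f\in K$ they are uniformly approximable by mollified kernels, again by equicontinuity). If instead ``wide one layer neural operator'' is read as Chen--Chen intend it, namely the explicit finite sum $\sum_{k}c_k\,\sigma\bigl(\sum_j w_{kj}f(x_j)+\alpha_k\cdot y+b_k\bigr)$ or their branch--trunk product form, then no further packaging is needed and your argument is complete once the first gap is closed.
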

It seems as though that neural operators are very useful, and enjoy a lot of the properties that neural networks do not. Unfortunately, it turns out learning the operator (say, a linear differential operator) is much harder than learning the solution for a particular instance of it. Nonetheless, because of its invariance under geometry of domain, neural operators become much more cost-effective in multiple-evaluation problems. In the rest of this section, we will explore various popular neural operator architectures, which will be relevant in the later sections. 
\subsection{Graph Kernel Networks}
%\begin{figure}[ht]
%\centering
%\includegraphics[scale=0.7]{GKN_structure.png}
%\caption{Modeling clustered data using a graph structure. Nodes represent some measurable quantity which is locally constant. Edges encode information relating how changes to quantities in one node affect other nodes. The nonlocal structure of GKN's allow nodes "far away" to communicate (hence, message passing)}
%\end{figure}

One rudimentary neural operator architecture is called the Graph Kernel Network (GKN), and is an architecture inspired by the construction of the Green's function approximation calculation (see: Appendix). It was first introduced in \cite{https://doi.org/10.48550/arxiv.2003.03485}. It relies on graph-structured data, which has already seen fair utility in CNN's \cite{https://doi.org/10.48550/arxiv.1609.02907} as well as in molecule modelling \cite{journalarticle}. Let us construct them, as well as explore some of the properties they enjoy. Using the same notation as in \eqref{eq:a2}, a GKN is a neural operator where each layer $\mathcal{L}_j: H^s(D; \mathbb{R}^d) \to H^s(D; \mathbb{R}^d)$ is given by 
\begin{equation}
    \mathcal{L}_jf = \sigma\left(W_jf+b_jf+K(a; \theta)f\right)
\end{equation}
where the integral operator $K(a,\theta):H^s(D; \mathbb{R}^d) \to H^s(D; \mathbb{R}^d)$ is given by
\begin{equation}\label{eq:a10}
    [K(a;\theta)f](x) \coloneqq  \int_D \kappa_\theta(x,y,a(x),a(y))f(y)dy
\end{equation}
where the integral kernel $\kappa_\theta
:\mathbb{R}^{2d+2n} \to \mathbb{R}^{d\times d}$ is parametrized by $\theta \in \Theta$, a finite-dimensional parameter space. Here, $\kappa_\theta$ is itself modeled as a neural network defined on a graph constructed on the domain $D$. Its specific architecture can vary depending on utility. In light of kernel methods, we remark that the theme of learning a kernel via a NN is present. We also remark that the integral may be computed over any Borel measure, but we will restrict to the Lebesgue measure in this paper and computations. 
To compute the integral in (\ref{eq:a10}), the domain $D$ 
is partitioned into the complete graph $K_n$, then a message-passing graph network is used. More specifically, equation \eqref{eq:a10} is approximated by a Monte Carlo sum
\begin{equation}\label{eq:99}
    v_{t+1}(x)= \sigma \left(Wv_t(x) + \frac{1}{\lvert N(x)\rvert}\sum_{y\in N(x)}\kappa_\theta(e(x,y))v_t(y)\right) 
\end{equation}
where $v_t(x) \in \mathbb{R}^{n_v}$ are \textit{node features} and $e(x,y) \in \mathbb{R}^{n_e}$ are \textit{edge features}. Figure 6 is a visualization of a domain decomposition. We remark that this is in fact equivalent to a \textit{message passing graph network} \cite{gilmer2017neural}. We relate this to SCA (described in the next section) by letting $v_t(x) = \beta_I$ for $x \in D_I$ and $e_t(x,y) = D_{IJ}$ for $x \in D_I$ and $y \in D_J$. Here, $\kappa_\theta: \mathbb{R}^{n_e} \to \mathbb{R}^{n_v\times n_v}$ is a variable (usually shallow) inner neural network. Using the graph structure, we are able to capture many non-local properties which a uniform mesh or finite-element triangulation cannot measure. Traditional domain-decomposition algorithms which compile the within-cluster average behaviour of materials do not consider nonlocal cluster interactions. This approach is thus specimen-specific and hence fails when generalizing to highly nonlinear behaviours, such as strain and elasticity. In the view of computational accuracy, the decomposition of the domain to $K_n$ with appropriate node and edge weights is important to capture all nonlocal effects.  Unfortunately, it is apparent that representing a domain as the complete graph is computationally expensive, as the number of edges increases on the order of $O(|V(G)|^2)$. One typical method employed to reduce the computation cost is Nyström approximation:  \\\\
Consider the set of nodes in the full graph $V(G) = \{v_1,v_2,\dots,v_n\}$ and for a fixed $m\ll n$, choose a subset of $m$ vertices $V' = \{v_{j_1},v_{j_2},\dots,v_{j_m}\}$ uniformly at random and construct a random subgraph induced on those vertices. We then approximate the sum part of (\ref{eq:99}) by 
\begin{equation*}
    \int_D \kappa_\theta(x,y,a(x),a(y))f(y)dy \approx \frac{1}{\lvert V'\rvert}\sum_{y\in V'}\kappa_\theta(e(x,y))v_t(y)
\end{equation*}
indeed, if we repeat this process $s$ times, the computation complexity grows no faster than $O(sm^2) \ll O(|V(G)|^2)$. Finally, we remark that discretization-invariance of GKN's have not yet been analytically established, but numerical results from section 5 as well as \cite{https://doi.org/10.48550/arxiv.2003.03485}\cite{li2020multipole} show that test error is consistent through varying resolution, confirming hypotheses. In the following subsection, we impose a clever restriction on $\kappa_\theta(x,y,a(x),a(y))$ so that we may use Fourier analysis, and further, yield a universal approximation theorem which is stronger than Theorem 3. Nonetheless, we conclude this subsection with an open problem.

\begin{tcolorbox}[colback=white!5!white,colframe=black!75!black]
\begin{problem}
    Prove or disprove that Graph Kernel Networks, when viewed as an operator between function spaces, are discretization-invariant, in the sense of (\ref{eq:991}).
\end{problem}
\end{tcolorbox}

\subsection{Fourier Neural Operators}
The Fourier neural operator was first introduced in \cite{li2021fourier} (see Fig. \ref{fig:FNO}), and exploits the convolution operation to allow for much of the learning to be done in Fourier space. Moreover, in conjunction with discretization algorithms, one can use the fast Fourier transform (FFT) as well, allowing for even more efficient computations. A \textbf{Fourier neural operator} is a kernel network with integral kernel $k_\theta$ void of any dependence on the function $a$. That is, of the form 
\begin{equation}
\kappa_\theta(x,y,a(x),a(y)) \coloneqq \kappa_\theta(x-y)
\end{equation}
for some parametrized $\kappa_\theta$ which makes (\ref{eq:a10}) into a convolution operator:
\begin{equation}
    [K(a;\theta)f](x) =  \int_D \kappa_\theta(x-y)f(y)dy
\end{equation}
Applying the convolution theorem, we see obtain
\begin{equation}
    K(a;\theta)f = \mathcal{F}^{-1}(\mathcal{F}(\kappa_\theta)\cdot\mathcal{F}(f))
\end{equation}
and so we may put everything together to derive the layer structure for the FNO:
\begin{equation}\label{eq:fno1}
    \mathcal{L}_jf = \sigma\left(W_jf+b_jf+\mathcal{F}^{-1}(\mathcal{F}(\kappa_\theta)\cdot\mathcal{F}(f))\right)
\end{equation}
for every $1\leq j \leq L$. FNO's have shown to be extremely effective in solving PDE's - the results presented from \cite{li2020fourier} are orders of magnitudes faster than traditional solvers. Figure 1 gives a diagram of the FNO architecture. It can be seen FNO has 3 main building blocks, namely encoder $P$, Fourier blocks, and decoder $Q$. The encoder $P$ can lift the dimension of the input features and therefore uncover hidden information in the original input features. The Fourier block follows (24) . Finally, the decoder $Q$ is used to reconstruct the solution in the original domain. Recently, it has been shown that with more Fourier blocks and a larger number of Fourier modes, it is expected that FNO can approximate any nonlinear operators\cite{https://doi.org/10.48550/arxiv.2107.07562}.
\begin{theorem}[Kovachki et al., 2017]\label{eq:a11}
\label{thm:thm1}
Let $s \geq 0$, and $\Phi: H^s(D; \mathbb{R}^n) \to H^s(D; \mathbb{R}^k)$ be a continuous operator, and $\Omega \subset H^s(D; \mathbb{R}^n)$ be compact. For every $\epsilon > 0$, there exists a Fourier neural network $\mathcal{N}:H^s(D; \mathbb{R}^n) \to H^s(D; \mathbb{R}^k)$ (which can be regarded as a continuous operator) such that 
\begin{equation}
    \sup_{a \in \Omega}\Vert\Phi a - \mathcal{N}a\Vert_{H_s} < \epsilon
\end{equation}
\end{theorem}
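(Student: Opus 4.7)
The plan is to reduce Theorem \ref{thm:thm1} to the classical finite-dimensional universal approximation statement (Theorem 1) by exploiting compactness of $\Omega$, and then to verify that the FNO architecture defined in (\ref{eq:fno1}) is rich enough to realize the resulting finite-dimensional approximant.

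First I would use compactness of $\Omega \subset H^s(D;\mathbb{R}^n)$ to obtain uniform decay of the Fourier coefficients of inputs $a \in \Omega$: for any $\eta > 0$ there is an $N$ so that the truncated Fourier projector $P_N$ satisfies $\sup_{a\in\Omega}\|a - P_N a\|_{H^s} < \eta$. Since $\Phi$ is continuous on a compact set, $\Phi(\Omega)$ is also compact in $H^s(D;\mathbb{R}^k)$, so a matching projector $Q_M$ truncates the outputs with uniform error at most $\eta$. Combining this with the uniform continuity modulus of $\Phi$ on $\Omega$, it suffices to approximate the induced map $\widetilde{\Phi}$ on the compact set $P_N(\Omega)$ of truncated Fourier coefficients; this is a continuous map between finite-dimensional spaces, so by Theorem 1 a standard encoder-decoder neural network $\mathcal{N}_{\mathrm{NN}}$ approximates $\widetilde{\Phi}$ to arbitrary precision.

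The main step is to realize $\mathcal{N}_{\mathrm{NN}}$ as the restriction of an FNO to $P_N(\Omega)$. The central obstruction is that the nonlinearity $\sigma$ in (\ref{eq:fno1}) acts pointwise in physical space, whereas $\mathcal{N}_{\mathrm{NN}}$ operates on Fourier coefficients. I would address this by using the lifting operator $\mathcal{P}$ to introduce a large channel dimension $d$ in which individual Fourier coefficients are encoded as constant-in-$x$ functions; on such constants $\sigma$ reduces to a coordinatewise scalar nonlinearity, and a subsequent Fourier multiplier with kernel supported on the zero mode can mix channels to implement the linear part of each hidden layer of $\mathcal{N}_{\mathrm{NN}}$. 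Iterating layer by layer, then reassembling with a final Fourier multiplier that reinjects the processed coefficients onto modes of order at most $M$ and projecting via $\mathcal{Q}$, yields an FNO whose action on $P_N(\Omega)$ equals $\mathcal{N}_{\mathrm{NN}}$ up to negligible error; because truncated Fourier transforms are exact on trigonometric polynomials of degree at most $N$, no extra quadrature error is incurred in this simulation.

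The principal obstacle is arranging this encode-decode construction so that \emph{the same} FNO handles every $a \in \Omega$ uniformly: each Fourier-mode channel must track its target coefficient throughout the whole network, which forces careful bookkeeping of which channels carry "data" versus "identity" information between layers. I expect the cleanest execution to proceed in three phases: an initial layer that reads off the Fourier coefficients of $a$ via a Fourier multiplier and stores them as constants in dedicated channels, a block of layers simulating $\mathcal{N}_{\mathrm{NN}}$ on those channels, and a final layer that synthesizes the output function from the processed coefficients. The total error then splits into the input truncation error $\eta$, the output truncation error $\eta$, and the classical NN approximation error; choosing each term less than $\epsilon/3$, weighted by the modulus of continuity of $\Phi$ on $\Omega$, closes the estimate and yields $\sup_{a \in \Omega}\|\Phi a - \mathcal{N} a\|_{H^s} < \epsilon$.
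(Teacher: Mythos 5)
First, note that the paper does not prove this statement at all: Theorem~\ref{thm:thm1} is quoted from the cited work of Kovachki et al.\ \cite{https://doi.org/10.48550/arxiv.2107.07562}, so the only meaningful comparison is with the proof given there. Your overall strategy is in fact the same as in that reference: use compactness of $\Omega$ to get a uniform Fourier truncation $P_N$, observe that $P_M\circ\Phi\circ P_N$ factors through a continuous map between finite-dimensional coefficient spaces, approximate that map by a classical neural network via the finite-dimensional universal approximation theorem, and then emulate the result inside the FNO, splitting the error into truncation and approximation pieces. Up to that point the proposal is sound.

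The genuine gap is in the emulation step, and it is exactly the point that carries the technical weight of the real proof. You claim that ``an initial layer reads off the Fourier coefficients of $a$ via a Fourier multiplier and stores them as constants in dedicated channels.'' A Fourier multiplier, i.e.\ a convolution $f\mapsto\mathcal{F}^{-1}(\mathcal{F}(\kappa_\theta)\cdot\mathcal{F}(f))$, acts diagonally in frequency: it sends mode-$k$ content to mode $k$ and cannot deposit $\hat a(k)$ into the zero mode. More strongly, the map $a\mapsto \hat a(k)\cdot 1$ (a constant function) is not translation-equivariant, while every ingredient of the layer \eqref{eq:fno1} as defined in this paper --- the pointwise terms $W_jf$ and $b_jf$, the convolution, and the pointwise activation --- is translation-equivariant; hence any network assembled from these layers is itself translation-equivariant and cannot approximate a general operator $\Phi$, so your construction cannot be completed within the stated architecture. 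The proof of Kovachki et al.\ breaks this symmetry deliberately: their FNO carries non-constant bias functions (equivalently, appended coordinate or trigonometric features $e^{\pm ikx}$), uses the activation to approximate the pointwise products $a(x)e^{-ikx}$ (the layer itself contains no multiplication), and only then applies the zero-mode projection --- which \emph{is} a legitimate Fourier multiplier --- to obtain $\hat a(k)$ as a constant channel, with a mirror-image construction using $e^{+ikx}$ for the final synthesis. Your bookkeeping for simulating the hidden finite-dimensional network on constant channels is fine (and matches their argument), but without this symmetry-breaking encode/decode mechanism, and without a genuine additive bias to create a constant channel (the paper's layer has $b_jf$, not $+\,b_j$), the central step of your proposal fails.
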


\begin{figure}[ht]
\centering
\includegraphics[scale=0.65]{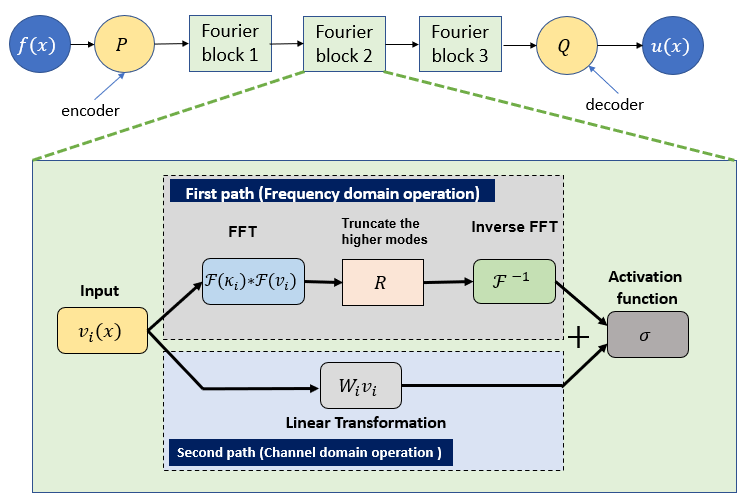}
\caption{Visual representation of an FNO. Each Fourier layer performs computations in Fourier space, then returns with an FFT from which normal neural network mechanisms take place.}
\label{fig:FNO}
\end{figure}
One may remark on the similarities to Theorems 2 and 3, as the only differences really are only the domains and codomains of the learned mappings. The figure above is visual on the layers of a FNO used in practice. One begins with a FFT into the Fourier space, where we learn the kernel $\kappa$. However, in practice we see that it is useful to truncate frequencies above a certain threshold, both to eliminate potential noise and reduce computation cost. Applying the inverse FFT, summing a linear term and then post-composing with an activation function is precisely the architecture described in (\ref{eq:fno1}).
\section{Multiscale Homogenization by Self-consistent Clustering Analysis (SCA)}

\subsection{Self-consistent Clustering Analysis (SCA)}

Hierarchical material systems are widespread in nature. A myriad of concurrent simulation techniques has been developed to model the mechanical behavior of hierarchical multiscale materials, such as FE-FE  \cite{feyel2000fe2}, FE-FFT  \cite{kochmann2016two}. However, one common drawback of these methods is the exorbitant computational cost, especially when the degrees of freedom (DOF) of the system are large. To this end, various reduced-order modeling techniques have been applied to multiscale simulation problems to reduce the computational cost \cite{michel2003nonuniform,yvonnet2007reduced,ladeveze2010latin,liu2016self}. Among them, Self-consistent clustering analysis (SCA) is a mechanistic, data-driven method that can model the mechanical response of heterogeneous materials while effectively saving computational time using the $k$-means clustering algorithm..

The computational paradigm of SCA mainly consists of 2 stages, namely the offline stage and the online stage. In the offline stage, material points with similar mechanical behaviors are grouped together using $k$-means clustering technique. The material points are clustered based on the strain concentration tensor $\mathbf{A}_{m}(\mathbf{x})$, which is defined as
\begin{equation}
 \boldsymbol{\varepsilon}_{m}(\mathbf{x})=\mathbf{A}_{m}(\mathbf{x}): \boldsymbol{\varepsilon}_{M} 
\end{equation}
where $\boldsymbol{\varepsilon}_{M}$ is the prescribed macroscopic strain along different directions, $\boldsymbol{\varepsilon}_{m}$ is the resultant local strain in the microscale level. Note only elastic response is required for the offline computations. With the obtained strain concentration tensor $\mathbf{A}_{m}(\mathbf{x})$ at each different material point, the whole computational domain can be effectively decomposed by the $k$-means clustering algorithm. Therefore, the number of DOF for the problem can be significantly reduced. Furthermore, the interaction tensors $ \mathbf{D}^{I J}$ among different clusters can be computed.

\begin{equation} \label{Eq 27}
    \bm{D}^{IJ} = \frac{1}{c^{I}|\Omega|} \int_{\Omega} \int_{\Omega}\chi^{I}(x)\chi^{J}(y) \mathcal{G}(x,y)dxdy.
\end{equation}

In the equation, $ \mathbf{D}^{I J}$ is the interaction tensor between $I$th and $J$th clusters, $c$ is cluster identifier, $I, J$ are the cluster indices, and $\chi^I(x)$ is the cluster variables which assumes value 1 when $x$ is in cluster $I$ and zero otherwise, and $\mathcal{G}(x,y)$ is the Green's function. In the online stage, the mechanical response of the microstructure is obtained by solving the incremental form of discretized \textbf{L}ippmann-\textbf{S}chwinger (LS) equation.

\begin{equation}
 \Delta \varepsilon^{I}+\sum_{J=1}^{k} \mathbf{D}^{I J}:\left[\Delta \sigma^{J}-\mathbf{C}^{0}: \Delta \varepsilon^{J}\right]-\Delta \varepsilon^{0}=0 
 \label{lp}
\end{equation}
where $\mathbf\Delta \varepsilon^{J}$ and $\mathbf\Delta \sigma^{J}$ are incremental strain and stress in the $J$th cluster; $k$ is the number of clusters; $\mathbf{C}^{0}$ is the reference material stiffness; $\mathbf\Delta \varepsilon^{0}$ is the far-field strain. Note that Eq \ref{lp} is obtained by averaging the stress and strain in each different cluster. Since the number of clusters is far less than the number of DOF of the original system, it can be much faster to solve Eq \ref{lp}. For more details please see \cite{liu2016self}.

\subsection{Solving Clustered LS Equation using Neural Operators}

\subsubsection{Fourier Neural Operators and Clustering}
Theorem \ref{thm:thm1} is useful, in the sense that we now know at least a subset of all the functions FNO's can approximate. Unfortunately,  we cannot immediately use FNO's on clustered functions, in particular those which are the result of SCA. This follows simply because step functions are not $H^s$. However, we can design an algorithm to produce a good approximation of piece-wise constant functions on $\mathbb{R}$ for which we can apply FNO's. Firstly, recall that for the Heaviside step function 
\begin{equation*}
     H(x) = \left\{
\begin{array}{ll}
      0 & x <0  \\
      \alpha & x \geq 0\\
\end{array} 
\right.
\end{equation*}
we have the analytic approximation 
\begin{equation}
    \lim_{k\to \infty}\left(\frac{\alpha}{2}
    +\frac{\alpha}{2}\tanh kx\right) = H(x)
\end{equation}
holding pointwise and distributionally for every $x$. \\\\
Let $D = [a,b]$ be an interval domain, and suppose we are given some domain decomposition (ex. SCA) $D = D_1 \sqcup \dots \sqcup D_m$ where each $D_j$ are of the form $[d_{j-1},d_j)$, and let $\Gamma$ be the set of all possible such finite partitions. Let $\varphi
: D \to \mathbb{
R}$ be of the form
\begin{equation}\label{eq:a6}
         \varphi(x) = \left\{
\begin{array}{ll}
      \omega_1  &, x \in D_1 \\
      \omega_2 &, x \in D_2 \\
      \vdots & \vdots \\
      \omega_k &, x \in D_m \\
\end{array} 
\right.
\end{equation}
for constants $\omega_1,\dots,\omega_k$. Equivalently, functions of the form in (\ref{eq:a6}) may be written in step function form as 

\begin{equation}\label{eq:a7}
\varphi(x) = \sum_{j=1}^k\omega_{j}\mathds{1}_{D_j}(x)
\end{equation}

where $\mathds{1}$ is the indicator function. Our goal is to approximate $\varphi$ with a $H^s$, smooth almost-everywhere function $f$. Luckily, there is a natural extension of the approximation of the Heaviside function to functions of the form (\ref{eq:a7}). We do so iteratively: 
Write $\Omega_j = [a_{j-1}, a_j]$ so that each $a_j$ is a \textit{break point} and $a_0 = a$ and $a_k = b$. Let $y_j = (a_{j-1}+a_j)/2$ be the midpoints of each $\Omega_j$. Let $f_1: [a,y_2] \times \mathbb{N}\to \mathbb{R}$ be given by 
\begin{equation*}
    f_1(x,k) = \omega_1+\frac{\omega_2-\omega_1}{2}
    +\frac{\omega_2-\omega_1}{2}\tanh k(x-a_1)
\end{equation*}
Notice this is just an affine transformation of the step-approximation. Now, let $f^*_2: [y_2,y_3] \times \mathbb{N} \to  \mathbb{R}$ be given by 
\begin{equation*}
    f^*_2(x,k) = \omega_2+\frac{\omega_3-\omega_2}{2}
    +\frac{\omega_3-\omega_2}{2}\tanh k(x-a_2)
\end{equation*}
To define $f_2$, we need to first define a correction function at each midpoint $\epsilon: \{1,\dots,k\} \to \mathbb{R}$ by $\epsilon(1) = 0$ and $\epsilon(2) = \omega_2-f_1(y_2)$. Next, define 
\begin{equation*}
    f_2(x) = f_2^*(x) + \epsilon(2)
\end{equation*} 
so that we force agreement at midpoints (and hence produce an $H^s$ function): $f_1(y_2) = f_2(y_2)$. Now, we extend this construction inductively for the rest of the clusters in the obvious way: \\\\

Given that $f_1, \dots, f_{j-1}$ and $\epsilon(1), \dots, \epsilon(j-1)$ are already defined, we construct $f_j^*: [y_j, y_{j+1}] \times \mathbb{N} \to \mathbb{R}$ by 
    \begin{equation*}
    f^*_j(x,k) = \omega_j+\frac{\omega_{j+1}-\omega_j}{2}
    +\frac{\omega_{j+1}-\omega_j}{2}\tanh k(x-a_2)
\end{equation*}
and $\epsilon(j) = \omega_j - f_{j-1}(y_j)$. Finally, let $f_j(x) = f_j^*(x)+\epsilon(x)$ for $2\leq j \leq m-1$. To ensure the domain $[a,b]$ is covered, just extend the final $f_{m-1}: [y_{m-1}, y_m] \to \mathbb{R}$  continuously to $[y_{m-1}, b]$. Finally, define the continuous, differentiable almost everywhere function $f_k: D\to \mathbb{R}$ by 
\begin{equation}
         f_k(x) = \left\{
\begin{array}{ll}
      f_1(x, k)  & x \in [a,y_2] \\
      f_j(x, k) & x \in [y_j,y_{j+1}] \\
      f_{m-1}(x, k) & x \in [y_{m-1},b] \\
\end{array} 
\right.
\end{equation}
for every $n \in \mathbb{N}$. In fact, this approximation function defines an operator $\mathcal{A}_{\gamma,k}: L(D) \to H^s(D)$, where $L(D)$ is the vector space of piecewise constant functions on $D$ equipped with the supremum norm and $\gamma \in \Gamma$. It is not hard to see that
\begin{theorem} Let $D = [0,1]$ and fix a domain decomposition $\gamma \in \Gamma$. The operator $\mathcal{A}_{\gamma,k}: L(D)\to H^s(D)$ taking $\varphi \mapsto \mathcal{A}_{\gamma,k} \varphi = f_k$ is a well-defined continuous operator. Moreover, 
\begin{equation}
    \lim_{k \to \infty}\lvert \varphi(x) - \mathcal{A}_{\gamma,k}\varphi(x))\rvert = 0
\end{equation}
pointwise almost everywhere.
\end{theorem}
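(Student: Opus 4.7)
The plan is to address the three claims in turn: well-definedness (that $f_k$ actually lies in $H^s(D)$), continuity of the map $\varphi \mapsto f_k$, and pointwise-almost-everywhere convergence of $f_k$ to $\varphi$ as $k\to\infty$. Throughout, I would keep $\gamma$ and $k$ fixed for the first two claims and vary $k$ only for the third. The general philosophy is that once the correction functions $\epsilon(j)$ are used to guarantee global continuity, the rest of the argument is a finite-dimensional linear algebra exercise plus a uniform estimate on $\tanh$.

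For well-definedness, I would first observe that each $f_j^*(\cdot,k)$ is smooth on all of $\mathbb{R}$ (tanh composed with an affine function), and that $\epsilon(j)$ is a constant that depends only on the $\omega_i$ and on values of earlier $f_i^*$ at the midpoints. By induction on $j$, the choice $\epsilon(j)=\omega_j - f_{j-1}(y_j)$ forces agreement $f_{j-1}(y_j)=f_j(y_j)$, so $f_k$ is continuous on $[a,b]$. Being piecewise $C^\infty$ with bounded derivatives and globally continuous, $f_k$ then lies in $H^1(D)$, hence in $H^s(D)$ for $0\le s \le 1$, which is the range we actually need for the FNO theory.

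For continuity, I would use that fixing $\gamma$ makes $L(D)$ canonically isomorphic to $\mathbb{R}^m$ via $\varphi\mapsto(\omega_1,\dots,\omega_m)$, with the sup norm corresponding to $\ell^\infty$. The construction of $f_k$ is then visibly \emph{linear} in $(\omega_1,\dots,\omega_m)$: each $f_j^*$ is an affine combination of $\omega_j,\omega_{j+1}$ with $k$-dependent but $\varphi$-independent coefficients, and the corrections $\epsilon(j)$ unfold by induction into linear combinations of the $\omega_i$. Hence $\mathcal{A}_{\gamma,k}$ is a linear map from a finite-dimensional normed space into $H^s(D)$, and is therefore automatically continuous.

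For pointwise-a.e. convergence, I would fix $x \in D$ outside the finite (hence measure-zero) set $\{a_1,\dots,a_{m-1}\}$, so $x$ is bounded away from every break point by some $\delta_x>0$. If $x\in D_j$ and $x$ lies in the domain of $f_\ell$, the elementary bound $|\tanh k(x-a_\ell)-\operatorname{sgn}(x-a_\ell)|\le 2e^{-2k\delta_x}$ shows $f_\ell^*(x,k)\to\omega_j$. To eliminate the correction terms, I would prove by induction that $\epsilon(j)\to 0$: the base case $\epsilon(1)=0$ is immediate, and in the inductive step $y_j$ is bounded away from $a_{j-1}$, so $f_{j-1}^*(y_j,k)\to\omega_j$; combining with $\epsilon(j-1)\to 0$ gives $f_{j-1}(y_j)\to\omega_j$, hence $\epsilon(j)=\omega_j-f_{j-1}(y_j)\to 0$. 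Plugging this back gives $f_k(x)\to\omega_j = \varphi(x)$.

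The main obstacle I anticipate is precisely pinning down the admissible range of $s$ for the $H^s$ claim: by construction $f_k$ is $C^0$ but only piecewise $C^\infty$, with derivative jumps at the stitching points $y_j$, so membership in $H^s$ above a certain threshold requires care and may not hold uniformly in $k$. A clean technical fix would be to replace the hard stitching at each $y_j$ by a partition-of-unity blend supported in a small neighborhood of $y_j$, producing a $C^\infty$ approximant whose $H^s$ norms remain controlled; this preserves continuity of $\mathcal{A}_{\gamma,k}$ and does not affect the pointwise-a.e. limit, since the modifications shrink in measure. Restricting to $s\in[0,1]$ as above lets one avoid this technicality entirely for the purposes of the paper.
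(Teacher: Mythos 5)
The paper never actually proves this theorem---the construction of $f_k$ is followed only by ``it is not hard to see that''---so there is no official argument to compare against, and your proposal is the natural fleshing-out of that construction. Most of it is sound: identifying the $\gamma$-adapted step functions with $\mathbb{R}^m$ and deducing continuity of $\mathcal{A}_{\gamma,k}$ from linearity on a finite-dimensional normed space, the bound $\lvert \tanh k(x-a_\ell)-\operatorname{sgn}(x-a_\ell)\rvert\le 2e^{-2k\delta_x}$ away from break points, and the induction showing $\epsilon(j)\to 0$ (hence pointwise a.e.\ convergence) are all correct. Your closing caveat about the admissible range of $s$, and the remark that $L(D)$ as defined contains step functions not adapted to $\gamma$ so that one must restrict to the $\gamma$-adapted subspace, are both well taken.

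The one step that fails as written is the continuity claim at the stitching points. With the correction as defined, $\epsilon(j)=\omega_j-f_{j-1}(y_j)$, one gets $f_j(y_j)=f_j^*(y_j)+\omega_j-f_{j-1}(y_j)$, so exact agreement $f_j(y_j)=f_{j-1}(y_j)$ would require $f_j^*(y_j)+\omega_j=2f_{j-1}(y_j)$, which is false in general (try $\omega_{j-1}=0$, $\omega_j=1$, $\omega_{j+1}=0$): both sides only tend to $2\omega_j$ as $k\to\infty$, so $f_k$ has exponentially small but nonzero jumps at the $y_j$. (This defect is inherited from the paper, which makes the same assertion, but your proof leans on it.) Consequently $f_k\notin H^1(D)$, indeed $f_k\notin H^s(D)$ for $s\ge 1/2$, and your well-definedness argument collapses for $s\in[1/2,1]$; only $s<1/2$ survives if the construction is taken literally. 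The repair is cheap and compatible with everything else you do: redefine the correction as $\epsilon(j)=f_{j-1}(y_j)-f_j^*(y_j)$, which genuinely glues the pieces continuously and still satisfies $\epsilon(j)\to 0$ (since $f_j^*(y_j)\to\omega_j$ and $f_{j-1}(y_j)\to\omega_j$ by your own estimates), or apply your proposed partition-of-unity smoothing near each $y_j$. With either fix, your linearity, $H^1$-membership (for $s\le 1$), and pointwise a.e.\ convergence arguments go through unchanged.
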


In general, current clustering techniques such as $k$-means and SCA are not compatible with the most recent universal approximation theorem for FNO's. This follows simply because piecewise constant function are not Sobolev. Theorem 5 implies that this problem is theoretically resolved using the $\mathcal{A}_{\gamma,k}$ operator in the following way: \\\\
Let $\Phi: H^s(D) \to H^s(D)$ be an operator between Sobolev spaces which we are trying to learn. Fixing a domain decomposition $\gamma$, let $C_{\gamma}: H^s(D) \to L(D)$ be a clustering operator for which $C_\gamma\varphi$ is a piecewise constant approximation of the input function $\varphi$. However, $\varphi \not \in H^s$, so we cannot apply FNO's immediately. Now using $\mathcal{A}_{\gamma,k}$ as a "good" approximation of the piecewise constant function $C_\gamma \varphi$, which is in turn a good approximation of the $H^s$ function $\varphi$, then we may be interested in learning the operator $\mathcal{A}_{\gamma,k} \circ \mathcal{C}_\gamma:H^s(D) \to H^s(D)$. Of course, outside of a theoretical perspective, this should not be evidently any more efficient than using an FNO to learn $\Phi$ directly. This is summarized in the following (non commutative) diagram \\
\begin{tikzcd}
\centering
    H^s(D) \arrow[d, "C_{\gamma}"]\arrow[r,"\Phi"] & H^s(D)\\L(D) \arrow[ur, "\mathcal{A}_{\gamma,k}"']
\end{tikzcd}\\
With this, we present two open questions. The first concerns the fact that most mechanistic settings require domains to be in two or three dimensions. Problem 2 asks for construction of an analytic approximation in the sense of Theorem 5. The second problem concerns the practicality of such approximations. Although the question is limited to FNO's, it is nonetheless a technical curiousity.  

\begin{tcolorbox}[colback=white!5!white,colframe=black!75!black]
\begin{problem}[High-dimensional analogue]
Let $D \subset \mathbb{R}^d$. Let $\varphi: D \to \mathbb{R}$ be piecewise constant on a  partition $\gamma$. Does there exist a well-defined continuous approximation operator $\mathcal{A}_{\gamma,k}: L(D) \to H^s(D)$ for which 
\begin{equation}
    \lim_{k \to \infty}\lvert \varphi(x) - \mathcal{A}_{\gamma,k}\varphi(x))\rvert = 0
\end{equation}
pointwise almost everywhere?
\end{problem}
\end{tcolorbox}

\begin{tcolorbox}[colback=white!5!white,colframe=black!75!black]
\begin{problem}[Efficacy of Theorem 5]
Assuming the affirmative to Problem 2, is learning an operator $\Phi$ with a full-field method faster or the clustered-approximated operator $\mathcal{A}_{\gamma,k} \circ \mathcal{C}_\gamma$ faster? 
\end{problem}
\end{tcolorbox}
Nonetheless, while we cannot justify the efficacy of FNO's at this moment, we can still achieve significant results with graph kernel networks (see Section 3.2) The following section is an exhibit of experimental results of FNO's and GKN's used in conjunction with SCA. 
\section{Numerical Examples}
This section provides numerical example of solving LS equation with kernel learning of the operator. In the previous sections it was shown that the FNO may be feasible for spaces of clustered functions and three problems are proposed for general operator learning. The current part of the article will give a small working example for FNO working on clustered space and show numerical evidence that graph kernel learning also works for one-dimensional functional space.

\subsection{Modeling Clustered Lippmann-Schwinger equation using FNO}

\begin{figure}[hbt!]
\centering
\includegraphics[scale=0.7]{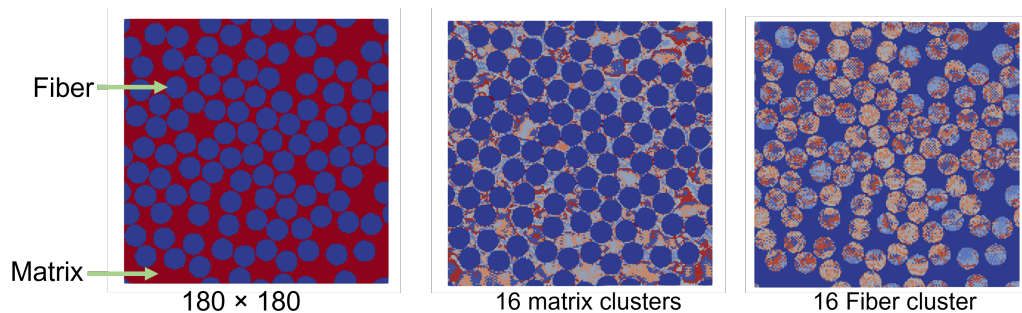}
\caption{An example two-phase composite material for solving Lippmann-Schwinger (LS) equation by SCA and FNO. Each matrix and fiber phases are divided into 16 clusters.}
\end{figure}

%\begin{figure}[hbt!]
%\centering
%\includegraphics[scale=0.8]{}
%\caption{The autoencoder structure showing %the convolution operator for learning the %kernel.}
%\label{fig:Autoencoder_system}
%\end{figure}

%\begin{figure}[h]
%\centering
%\includegraphics[width=0.95\textwidth]{}
%\caption{Numerical prediction of local strain distribution along X-axis for solving the LS equation, (a) FNO solution, (b) SCA solution, (c) Relative error of prediction, (d) Training performance. The solutions are presented for 16 clusters per phase.}
%\label{fig:AE_results}
%\end{figure}

\begin{figure}[h]
\centering
\includegraphics[width=0.75\textwidth]{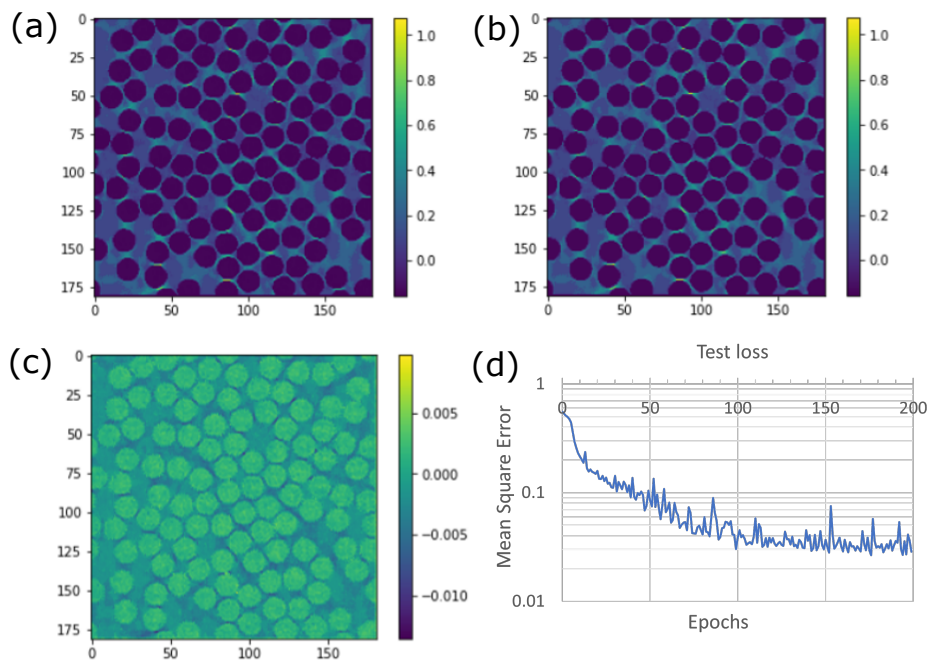}
\caption{Numerical prediction of local strain distribution along X-axis for solving the LS equation, (a) FNO solution, (b) SCA solution, (c) Relative error of prediction, (d) Training performance. The solutions are presented for 16 clusters per phase.}
\label{fig:Fig_domain16}
\end{figure}

\begin{table}[h!]
\caption{Material constants for matrix and inclusion}
\label{table:material_constants}
\centering
\begin{tabular}{ccccc} 
 \hline
 $E_{\text{matrix}}$ (GPa) & $\nu_{\text{matrix}}$ & $E_{\text{inclusion}}$ (GPa) & $\nu_{\text{inclusion}}$ & Area fraction of inclusion \\ [0.5ex] 
  \hline
  3.79 & 0.39 & 245.0 & 0.28 & 0.12\\ [0.5ex]
 \hline
\end{tabular}
\end{table}
\begin{equation} \label{eq:yeilding}
f=\bar{\sigma}-\sigma_{Y,matrix}\left(\bar{\varepsilon}^{p}\right)\leq 0
\end{equation}

\begin{equation} \label{eq:harden}
    \sigma_{Y,matrix} = \begin{cases}
    0.50 + 5\bar{\varepsilon}^\textrm{p}, &0<\bar{\varepsilon}^\textrm{p}\le 0.04\\
    0.62 + 2\bar{\varepsilon}^\textrm{p}, &0.04<\bar{\varepsilon}^\textrm{p}
\end{cases}
\end{equation}

The example two-dimensional composite material considered for numerical analysis is shown in Figure 2. The material consists of matrix phase (red) and fiber phase (blue). Each phase is divided into 16 clusters for training. The input structure is of $180 \times 180$ grid points and it is reduced to 32 clusters in total. The matrix material is isotropic elasto-plastic while the fiber is orthotropic elastic. The material properties are mentioned in the Table \ref{table:material_constants}. For training the FNO, the 1500 training and 500 testing data was used from strain $0-0.2$. For simplicity only uniaxial tensile strain along $X$ is applied. The FNO has 5 hidden layers. Since the matrix is elasto-plastic, we tried to capture the sensitivity to input strain. The network was trained on microstructure and applied strain as inputs, and the total strain as the output following the construction of LS equation. The testing results for SCA computation and FNO are presented for 16 clusters are presented in Fig. \ref{fig:Fig_domain16}. In Fig. \ref{fig:Fig_domain16}(a) the FNO solution and in (b) the kernel learning results are presented. The local solution seems to converge well for 16 clusters (see the errors in Fig. \ref{fig:Fig_domain16}(c) and (d)). This proves the efficacy of the FNO method to approximate the solution of LS equation. However, when the trained network was used to predict the results for 32 clusters in the same simulation domain. The results are shown in Fig. \ref{fig:Fig_domain32}. We can observe that the error of approximation is at least one order of magnitude higher. However, with the analysis, the authors do not conclude that FNO cannot extrapolate the solution for number of clusters as the network was trained on 16 clusters and varying strain values. The purpose of presenting this example is to prove that it is possible to approximate the clustered domain solving the LS equation using FNO. This follows from our previous discussion in section 4.3.1. However, one feature of FNO is it uses convolutional kernel on all the material points in the Fourier domain. While the transformation to Fourier domain achieves the discretization independence, the training procedure involves a large number of parameters. Following this observation, we move on to propose a graph kernel-based learning method that can approximate the solution for the entire domain based on convolution operator on a few cluster centroids.

%\begin{figure}[hbt!]
%\centering
%\includegraphics[scale=0.9]{Autoencoder.png}
%\caption{Training loss for autoencoder}
%\end{figure}

\begin{figure}[h]
\centering
\includegraphics[width=0.95\textwidth]{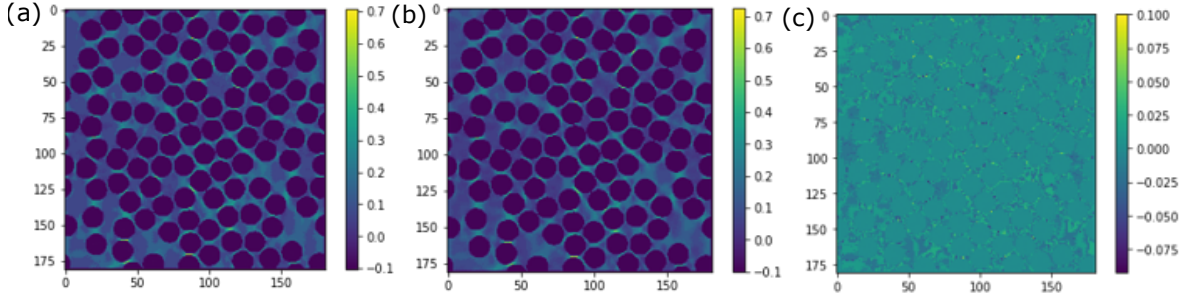}
\caption{Numerical prediction of local strain distribution along X-axis for solving the LS equation, (a) FNO solution, (b) SCA solution, (c) Relative error of prediction. The solutions are presented for 32 clusters per phase.}
\label{fig:Fig_domain32}
\end{figure}

\subsection{Modeling Clustered Lippmann-Schwinger equation using Graph Kernels}
This section thoroughly analyzes how graph kernel learning can be used to solve the LS equation. For convenience of understanding, a one-dimensional example is solved and the parameters are varied to mimic the engineering constraints. A brief summary on the method can also be found in the companion paper \cite{saha2023dldc} published in the same special issue.

\begin{figure}[h]
\centering
\includegraphics[width=0.85\textwidth]{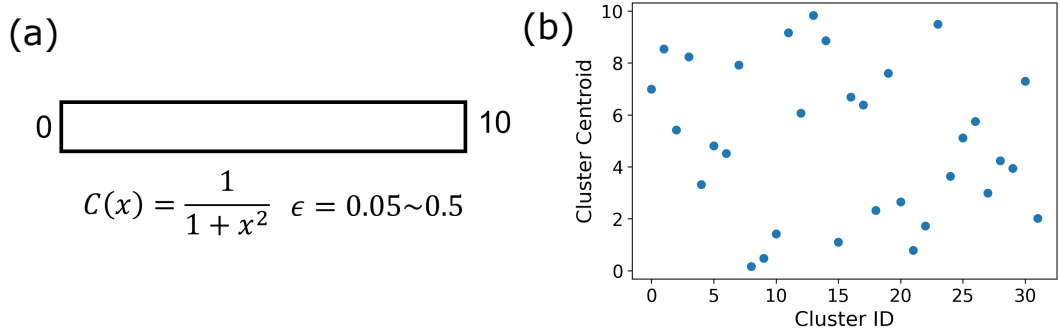}
\caption{(a) One-dimensional domain for solving the LS equation. (b) A sample of distribution of cluster centroids.}
\label{fig:Fig_domain}
\end{figure}

The sample one-dimensional domain is shown in Fig. \ref{fig:Fig_domain}(a). For training, the domain size was taken as $x$ = $0$ to $10$. The number of material points for solution was 16384. The material is considered to be linear elastic and the stiffness is $C(x)=\frac{1}{1+x^2}$. The strain was varied from 0.05 to 0.35 to generate the solution. Since this is a one-dimensional problem, the applied macroscopic strain is a scalar. The domain is then clustered using $k$-means clustering. The idea of kernel learning on cluster centroids is explained in Fig.\ref{fig:1D_cl}. The cluster centroids are taken as the nodes of the graphs. The far-field strain and co-ordinates of the cluster centroids are taken as the edge attributes. The radius of neighborhood construction $r$ follows Eq. \ref{eq:99} and accounts for how many cluster centroids are interacting with each other. 

\begin{figure}[ht]
\centering
\includegraphics[scale=0.75]{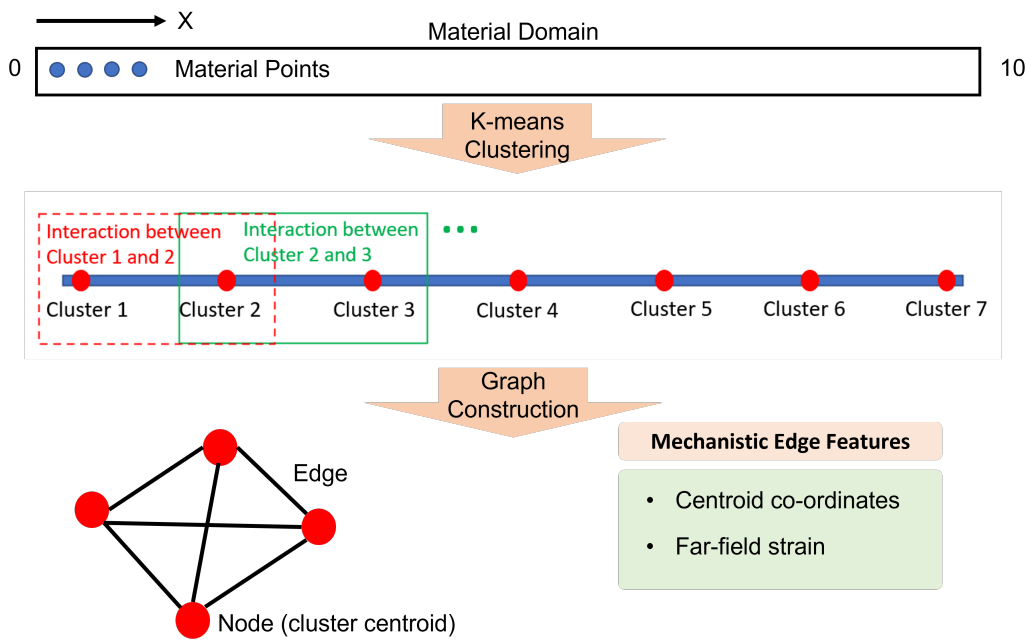}
\caption{The idea of domain reduction for graph kernel learning. The entire physical domain is clustered based on some physical response. The cluster centroids are taken as nodes of the graph and edge (and/or node) features include mechanistic parameters such as co-ordinates and strain.}
\label{fig:1D_cl}
\end{figure}

\begin{figure}[!]
\centering
\includegraphics[width=0.95\textwidth]{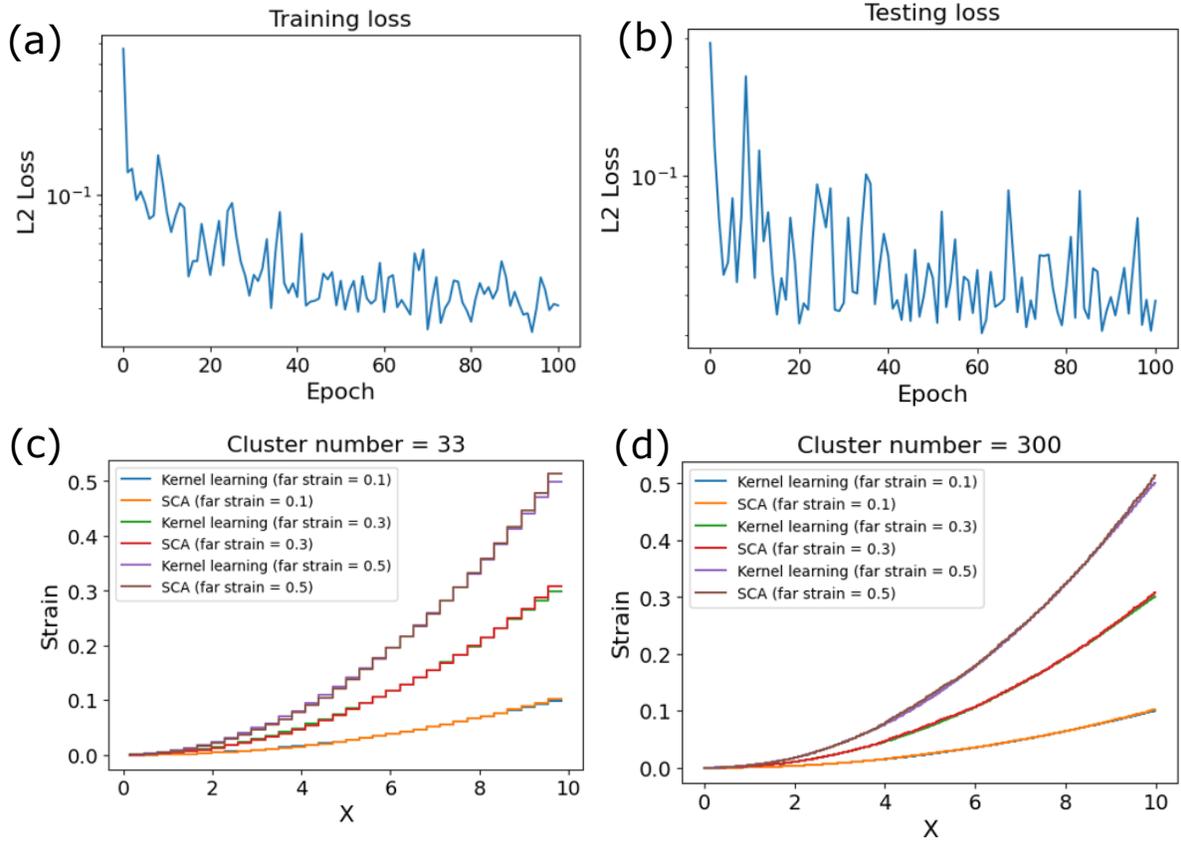}
\caption{(a) Training loss, and (b) testing loss as function of epochs. The results for strain levels 0.13, 0.26, and 0.39 for (c) 33 clusters and (d) 300 clusters.}
\label{fig:Fig_domain}
\end{figure}

The training data are generated by SCA solver by varying the far field strain and a set of clusters defined as $2^n$ where $n$ = 1, 2, 3, 4, 5, 6. The training performance is shown in Fig. \ref{fig:Fig_domain}(a) and (b) in terms of L2 loss of the training and testing process. Figs. \ref{fig:Fig_domain}(c) and (d) shows the performance of the trained kernel on predicting the local strain distribution. For each case, the applied far-field strains are 0.13, 0.26, and 0.39. The last strain value is outside the training range. Fig. \ref{fig:Fig_domain}(c) shows the results for 33 clusters in the domain. It is clear that the results converge very well with the SCA computation for all the strains. If the number of clusters is increased beyond the training range (such as 300 in Fig. \ref{fig:Fig_domain}(c)), the learned kernel can predict the local solution field as good as SCA. The results show the efficacy of the kernel learning method of the interaction matrix operator to be "discretization invariant". That means if in a multiscale simulation setting, the kernel is learned on a smaller number of clusters and can be used to predict the solution for a higher number of clusters.       

\begin{figure}[ht]
\centering
\includegraphics[scale=0.55]{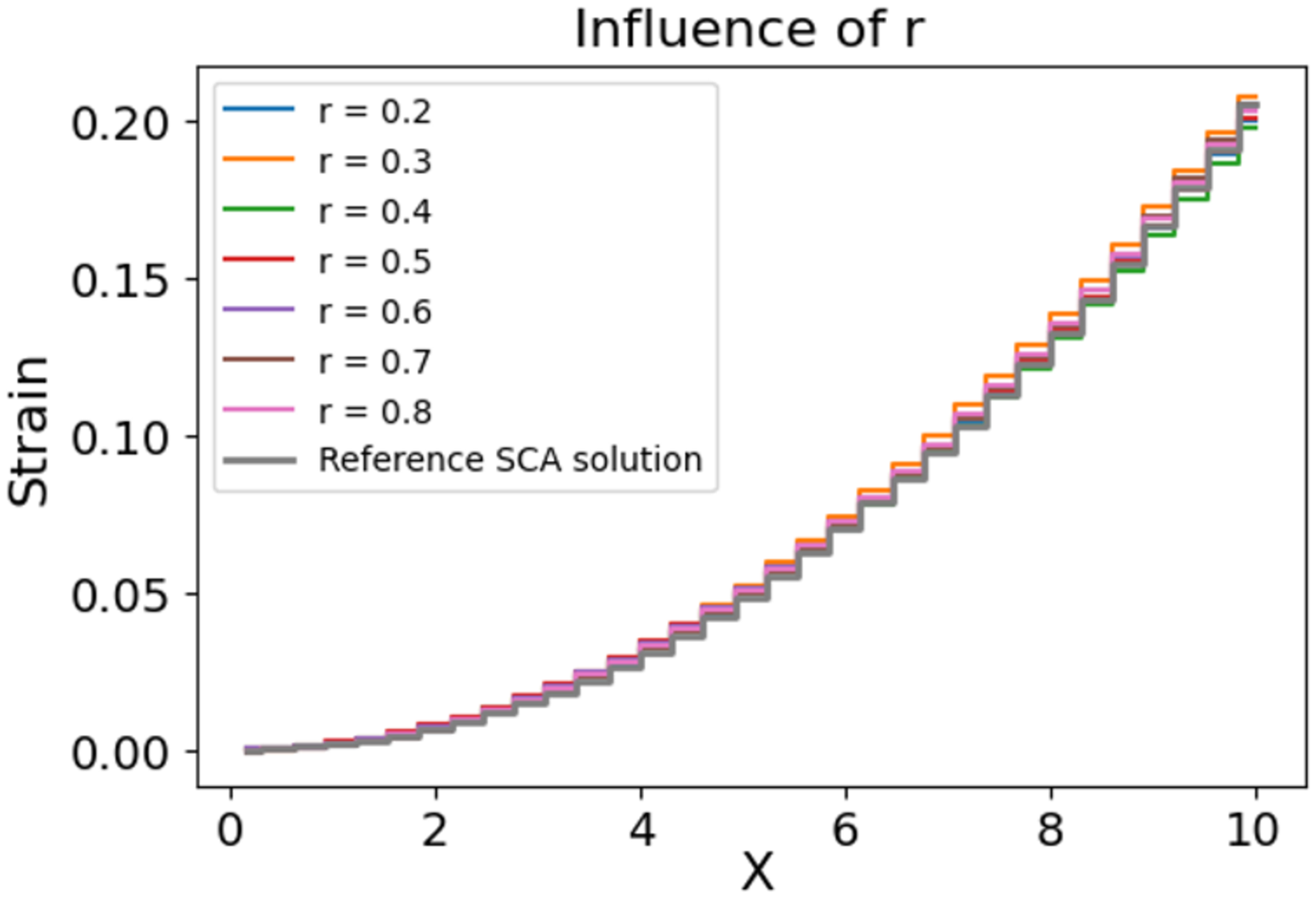}
\caption{Effect of adding more cluster centroids for creating the neighborhood. $r=1$ means all the centroids are considered.}
\label{fig:r_vary}
\end{figure}

\begin{table}[ht]
\centering
\begin{tabular}{l|llll}
\cline{1-2}
Radius & L2 loss &  &  &  \\ \cline{1-2}
2      & 0.0718  &  &  &  \\
3      & 0.0252  &  &  &  \\
4      & 0.0206  &  &  &  \\
5      & 0.0113  &  &  &  \\
6      & 0.0562  &  &  &  \\
7      & 0.0337  &  &  &  \\
8      & 0.0582  &  &  &  \\ \cline{1-2}
\end{tabular}
\caption{The L2 errors as a function of the radius of influence.}
\label{table2}
\end{table}

To understand the effect of how the radius of influence or the number of cluster centroids used for kernel learning on the local strain field approximation, the size of $r$ is varied and the resulting approximation error is observed for cluster 33. The results of this analysis are presented in Fig. \ref{fig:r_vary} and Table \ref{table2}. The value of $r$ is scaled to be below 1. When $r$ = 1, all the cluster centroids are taken into consideration. Interestingly, there is no specific trend in varying $r$ with accuracy. It seems that all the considered values give a reasonable accuracy. For $r$ = 0.5 the results are the closest to the reference solution. This analysis needs to be extended to further understand the effect of the influence of radius.     

\begin{figure}[h!]
\centering
\includegraphics[scale=0.8]{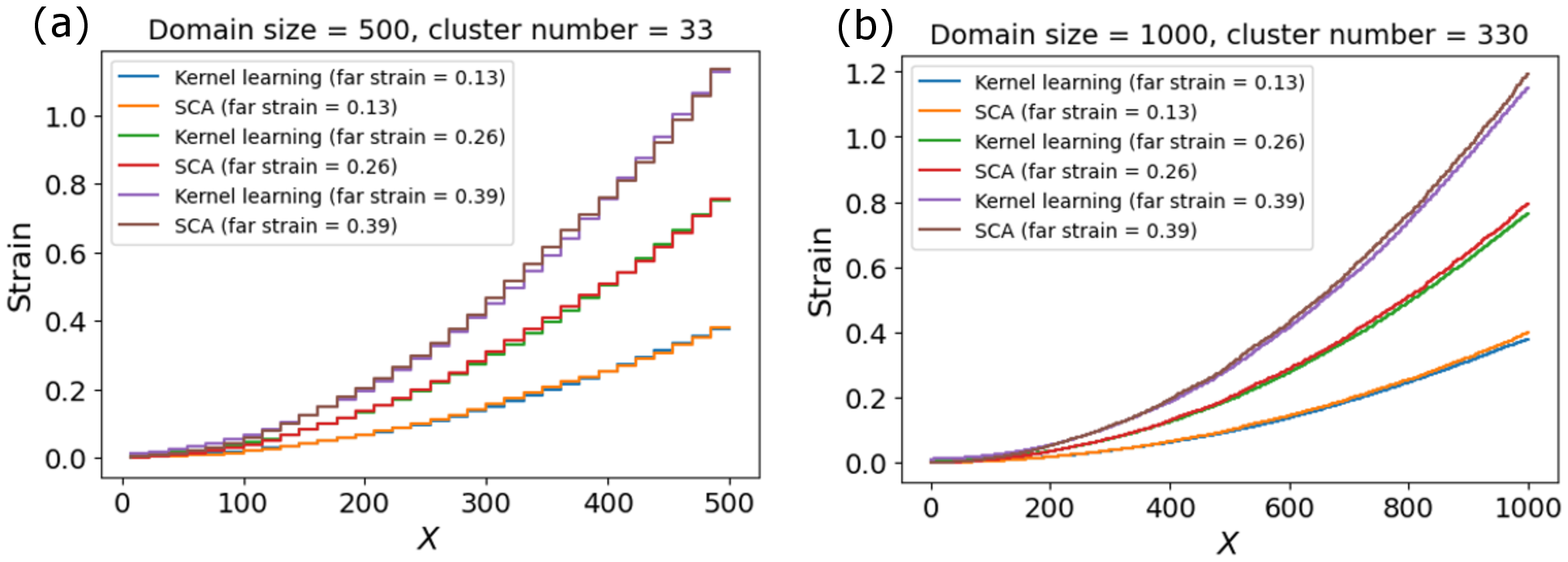}
\caption{Performance of the graph kernel learning in predicting an increased domain size of (a) 500 and (b) 1000. The original network was trained on a domain of size 10.}
\label{fig:domain_extension}
\end{figure}

Figure \ref{fig:domain_extension} shows perhaps the most important observation with the kernel learning approach. In this figure, the performance of the trained kernel is tested for a larger domain from 0 to 500 and to 1000. The domain used for training is 0 to 10. Hence, the predicted domain is 2-3 orders of magnitude larger. In both plots, the applied macro-strain is varied at three levels. The plot shows that the learned kernel performs as good as the SCA solution in approximating the solution. This is a significant progress on the SCA solution scheme. The result implies that kernel can learn the interaction tensor on a smaller training domain and can predict on larger domain. This alleviates a major bottleneck of SCA analysis as the offline computation requires expensive interaction tensor computation. Using kernel learning method, repetitive computation of the interaction tensor can be avoided. Here, an argument can be put forth that the training of the kernel still requires generation of data and computation of SCA solution including the interaction tensor. However, the finding suggests that if the kernel is trained on a smaller domain (RVE) of a hierarchical materials system, it can be used to predict solution of a larger domain such as solved for in \cite{saha2021microscale,saha2021macroscale}. The computation of interaction tensor for large number of degrees of freedom can be computationally more challenging compared to generating data in a smaller domain. 

Another major area of improvement in traditional SCA is adaptive clustering \cite{ferreira2022adaptivity}. Such technique is required for homogenization if there exists region of high strain gradient inside the RVE. The kernel learning approach can remove that circumvent the problem as well. As a demonstration of this capability, Fig. \ref{fig:Fig_adaptive} shows a case of composite clustering where the domain is divided into two sets of clusters. In Fig. \ref{fig:Fig_adaptive} (a), the first half of the domain is divided into 16 clusters (coarse) and last half is divided into 300 clusters (fine). For the Fig. \ref{fig:Fig_adaptive} (b) the same domain is divided into 32 and 600 clusters. For both cases, applied macrostrain was 0.2. The figure shows that the prediction for such composite clustering is as good as the SCA solution. This is an evidence that, with kernel learning, enrichment of the domain with additional clusters is possible. Extending this concept to 2D and 3D real microstructure could potential make multiscale materials modeling even faster.      

\begin{figure}
\centering
\includegraphics[width=\textwidth]{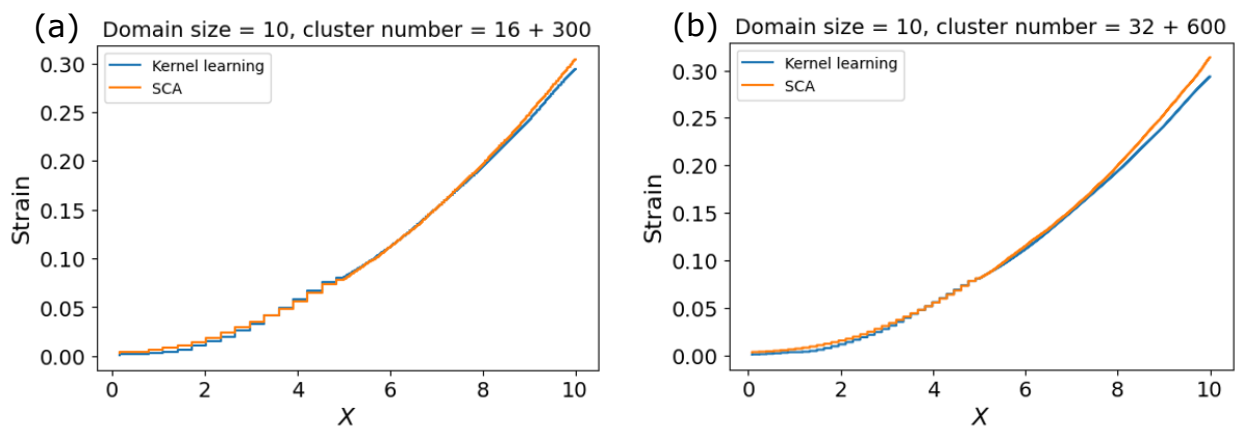}
\caption{Performance of kernel learning for composite clustering. (a) Domain size 10, 16 clusters in $0 \leq x \leq 5$, and 300 clusters in $5 < x \leq 10$. (b) Domain size 10, 32 clusters in $0 \leq x \leq 5$, and 600 clusters in $5 < x \leq 10$. }
\label{fig:Fig_adaptive}
\end{figure}
\section{Discussions and Future Work}

The article provides a thorough expository analysis of how operator and kernel learning have been implemented in recent literature. We have also examined the structures of two emerging neural operator architectures, GKN's and FNO's and asked three questions regarding their mathematical properties. The article also provides a new insight on how kernel learning can be used to learn the SCA computation and possibly extend the horizon of application of SCA. However, the examples presented are still in the preliminary phase. The demonstrative example of kernel learning is performed for linear elastic, one-dimensional problem. There is a chance that the parameters are overfitting. Hence, on the next step, two- and three-dimensional microstructures will be tested with non-linearity. The current one-dimensional analysis shows a potential to offset the computational cost of interaction tensor for multiple application of SCA method for same material and different microstructural instance. The adaptive clustering needs to be implemented for damage initiation and propagation cases. While modeling the high or low cycle fatigue, the atomistic shear strain becomes constant over a few cycles. Keeping this scope in mind, SCA was used to model such high cycle fatigue phenomenon in additive manufacturing alloys \cite{kafka2021image}. The trained kernel mimicking SCA can be used in this context where the trained kernel will be used to predict the evolution of atomistic shear strain.

\section{Conclusions}
The kernel learning algorithms presented in the article provide an insight on how to implement the operator learning in a reduced domain. The ROM presented in the paper has the potential to be useful for multiscale materials design. The capability of adaptive clustering and discretization invariance can offset the cost of computing the interaction tensor each time a new instantiation of microstructure is analyzed with SCA. On the other hand, the proposed method shows a way to use the graph-based operator learning for large-scale micromechanics problems where the traditional approach could result in a large number of training parameters. The article also provides a systematic analysis on how to use the kernel-based operator learning approaches for clustered domains and proposes three open problems to be proved. The numerical examples cover from demonstrations of the deep learning methods presented in the article and new development in graph-based learning. The discussions on the mathematical is deliberately kept brief albeit complete so that readers from applied mathematics and engineering can have easy access for introductory concepts.

\begin{acknowledgements}

The authors would like to acknowledge the support of National Science Foundation (NSF, USA) grants CMMI-1762035 and CMMI-1934367 and AFOSR, USA grant FA9550-18-1-0381. The authors would like to acknowledge
the contribution of Alberto Ciampaglia, Department of Mechanical and Aerospace Engineering, Politecnico di Torino, Italy and Visiting
Researcher, Department of Mechanical Engineering, Northwestern University to this article.

\end{acknowledgements}

\begin{appendices}
\section{Appendix 1: Preliminaries, Real and Functional Analysis}\label{secA1}
 In this section, we provide a self contained treatment of the machinery needed for the rest of the paper. We only assume working knowledge of undergraduate linear algebra, multivariable calculus and basic real analysis. The material will be adapted from \cite{Stein:1385521}, \cite{10.2307/j.ctvcm4hpw}.
\subsection{Glossary of Notation}
\begin{center}
\begin{tabular}{||c c ||} 
 \hline
 Symbol & Definition  \\ [0.5ex] 
 \hline\hline
 $\mathbb{R}^k$ & $k$-dimensional Euclidean space \\ 
 $\mathcal{A}(D,X)$ & A Banach space of functions from $D$ to $X$. \\
 $C^n$ & The space of all functions whose $n$'th derivative is continuous\\
 $L^p$ & The space of all functions with integrable $p$-powers\\
 $H^s(D,X)$ & The Sobolev space of all functions from $D$ to $X$\\
 $\Theta$ & A finite dimensional parameter space\\
 $\mathcal{F}(f)$ & The Fourier transform of an integrable function $f$\\
 $\sup_{\Omega}$ & The supremum over the set $\Omega$\\
 $K_n$ & The complete graph on $n$ vertices\\
 $B(x,r)$ & The ball of radius $r$ centered at $x$\\
 \hline
 
\end{tabular}
\end{center}
\subsection{Real and Functional Analysis}
We will briefly explore the basic results in real and functional analysis needed to motivate and support the theoretical portions of this paper. Suppose $U$ is an open subset of $\mathbb{R}^n$. Denote the complex vector space of all infinitely-differentiable functions $f: U \to \mathbb{C}$, with compact support to be $C_c^\infty(U)$. 
Recall that we may construct the dual space by taking the real vector space of all linear functionals $\varphi: C_c^\infty(U) \to \mathbb{C}$. 
\begin{definition}
A \textbf{distribution} on $\mathbb{R}^n$ (or any smooth manifold) is a continuous linear functional $u: C_c^\infty(\mathbb{R}^n) \to \mathbb{C}$. The space of all distributions will be denoted $\mathcal{D}'(\mathbb{R}^n)$. 
\end{definition}
where by continuity, we must introduce some more notation 
\begin{definition}
Given a sequence $\{\varphi_k\}_{k=0}^\infty$, we say that $\varphi_k$ converges to $\varphi \in \mathcal{D}'(\mathbb{R}^n)$ of $\varphi_k \to \varphi$ if
\begin{enumerate}
    \item There is a bounded set $M$ so that $\varphi_k$ vanishes outside $M$ for every $k$.
    \item For every multi-index $\alpha = (\alpha_1, \dots \alpha_n)$, we have that $D^\alpha\varphi_k \to  D^\alpha\varphi$ uniformly. That is, 
    \begin{equation}
        \frac{\partial^{\alpha_1}}{\partial x_1}\frac{\partial^{\alpha_2}}{\partial x_2}\dots \frac{\partial^{\alpha_n}}{\partial x_n}\varphi_k \to \frac{\partial^{\alpha_1}}{\partial x_1}\frac{\partial^{\alpha_2}}{\partial x_2}\dots \frac{\partial^{\alpha_n}}{\partial x_n}\varphi 
    \end{equation}
\end{enumerate}
\end{definition}
And now we use the standard sequential continuity criterion to define continuity on $\mathcal{D}'(\mathbb{R}^n)$ (or really for any Hilbert space):
\begin{definition}
A distribution $\zeta \in \mathcal{D}'(\mathbb{R}^n)$ is \textbf{continuous} if for every convergent sequence $\varphi_k \to \varphi$ in $C_c^\infty(\mathbb{R}^n)$, its image $\zeta(\varphi_k)$ converges to $\zeta(\varphi)$ in $\mathbb{C}$. 
\end{definition} 
Notice that this enforces a topological vector space as well as a continuous dual vector space structure. Let us explore a simple but important.

\begin{example}
Let $f: \mathbb{R}^n\to \mathbb{R}$ be such that for every compact set $K \subset \mathbb{R}^n$, 
\begin{equation*}
    \int_K|f| < \infty
\end{equation*}
(this property is called \textit{locally integrablity}) Then the linear functional $T_f: C_c^\infty(\mathbb{R}^n) \to \mathbb{R}$ given by
\begin{equation}
    \varphi \mapsto \int_{\mathbb{R}^n}f(x)\varphi(x)dx
\end{equation}
is a distribution. 
\end{example} 
\begin{example}
Using the above example, we notice because the Dirac delta "function"
\[   \delta (x) = \left\{
\begin{array}{ll}
      \infty & x = 0 \\
      0 & x \neq 0
\end{array} 
\right. \]
is locally integrable, it induces a distribution $\delta: C_c^\infty(\mathbb{R}^n) \to \mathbb{C}$ given by the linear map 
\begin{equation}
    \delta(\varphi) = \int_{\mathbb{R}^n}\delta(x)\varphi(x)dx = \varphi(0)
\end{equation}
\end{example}
Distributions have the attractive property that they can always be differentiated and for this reason, they are sometimes called \textit{generalized functions}. However, they are not evidently even remotely related to functions in the traditional sense. Perhaps most egregiously, one cannot even evaluate a distribution at a point $x \in \mathbb{R}^n$. Nonetheless, we can still speak about the value on open sets and in relation to "normal" functions: 
\begin{definition}
A distribution $\varphi: C_c^\infty(\mathbb{R}^n) \to \mathbb{C}$ \textbf{vanishes on an open set} $U \subset \mathbb{R}^n$ if $\varphi(f) = 0$ for every $f: \mathbb{R}^n \to \mathbb{C}$ supported inside $U$. \\\\
$\varphi$ \textbf{agrees with $f$} on an open set $U$ if 
\begin{equation}
    \varphi(\zeta) = \int_{\mathbb{R}^n}f(x)\zeta(x)dx
\end{equation}
for every $\zeta: \mathbb{R}^n \to \mathbb{C}$ supported in $U$.
\end{definition}
Furthermore, we can develop of a theory of differentiation for distributions which agrees with the usual theory for functions. To see how, first we will adopt the notational change for distributions $u(\varphi) \coloneqq \langle u, \varphi \rangle$.  
\begin{proposition}
Let $f:\mathbb{R}^n \to \mathbb{C}$ be continuously differentiable and denote $\partial^i f = \partial f /\partial x_i$. Then
\begin{equation}
    T_{\partial^i f}(\varphi) = - T_{f}(\partial^i\varphi)
\end{equation}
\end{proposition}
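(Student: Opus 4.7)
The plan is to recognize this proposition as a distributional reformulation of integration by parts, exploiting the compact support of the test function $\varphi$ to kill boundary terms. First I would unpack both sides using the definition $T_g(\psi) = \int_{\mathbb{R}^n} g(x)\psi(x)\,dx$, which makes sense for both $g=f$ and $g=\partial^i f$ since continuous differentiability of $f$ implies $f,\partial^i f \in C^0(\mathbb{R}^n) \subset L^1_{\mathrm{loc}}(\mathbb{R}^n)$. The target identity then reduces to the claim
\begin{equation*}
    \int_{\mathbb{R}^n} (\partial^i f)(x)\,\varphi(x)\,dx \;=\; -\int_{\mathbb{R}^n} f(x)\,(\partial^i \varphi)(x)\,dx.
\end{equation*}

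Next I would invoke compact support: since $\varphi \in C_c^\infty(\mathbb{R}^n)$, choose $R>0$ large enough that $\mathrm{supp}(\varphi) \subset B = [-R,R]^n$. Both integrands then vanish outside $B$, so the integrals are finite and Fubini's theorem applies. Splitting coordinates as $x = (x', x_i)$ with $x' \in \mathbb{R}^{n-1}$, I would iterate to get
\begin{equation*}
    \int_{\mathbb{R}^n} (\partial^i f)\,\varphi \,dx = \int_{\mathbb{R}^{n-1}} \left(\int_{-R}^{R}(\partial^i f)(x', x_i)\,\varphi(x', x_i)\,dx_i\right) dx'.
\end{equation*}
For fixed $x'$, the inner integral is a genuine one-variable integration by parts on $[-R,R]$: since $f(x', \cdot)$ is $C^1$ on this interval and $\varphi(x', \cdot)$ is $C^1$ (in fact smooth), classical calculus gives
\begin{equation*}
    \int_{-R}^{R}(\partial^i f)\varphi\,dx_i = \bigl[f(x',x_i)\varphi(x',x_i)\bigr]_{x_i=-R}^{x_i=R} - \int_{-R}^{R} f(x',x_i)(\partial^i\varphi)(x',x_i)\,dx_i.
\end{equation*}
The boundary term is identically zero because $\varphi(x', \pm R) = 0$ for every $x'$ (as $(\pm R, x')$ lies outside $\mathrm{supp}(\varphi)$).

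Finally I would reassemble via Fubini in reverse, obtaining $-\int_{\mathbb{R}^n} f\,(\partial^i \varphi)\,dx = -T_f(\partial^i \varphi)$, which is the desired identity. The main obstacle, such as it is, lies not in any deep estimate but in being careful that the hypotheses permit each tool: $f \in C^1$ gives continuity of $\partial^i f$ hence local integrability, compact support of $\varphi$ gives absolute integrability on $B$ (justifying Fubini) and killing the boundary contribution, and regularity of both factors on $\mathbb{R}$ justifies the pointwise integration by parts. With those boxes ticked the computation is essentially automatic, and the proposition follows.
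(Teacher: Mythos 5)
Your proposal is correct and follows essentially the same route as the paper: integration by parts with the boundary contribution annihilated by the compact support of $\varphi$. The only difference is one of detail — the paper invokes the multidimensional integration-by-parts formula and the fundamental theorem of calculus directly, while you justify that same step explicitly by a Fubini reduction to one-variable integration by parts on $[-R,R]$, which is a perfectly valid (and slightly more self-contained) way of filling in the identical argument.
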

\begin{proof}
Because $\partial^i f$ is continuous, it defines a distribution $T_{\partial^i f} \in \mathcal{D}'(\mathbb{R}^n)$, given by 
\begin{equation*}
    T_{\partial^i f}(\varphi) = \int_{\mathbb{R}^n} \partial^i f(x)\varphi(x)dx
\end{equation*}
Using the integration by parts formula in multiple dimensions,
\begin{equation}
    \int_{\mathbb{R}^n}\partial^i f(x)\varphi(x)dx = - \int_{\mathbb{R}^n} f(x)\partial^i\varphi(x)dx + \int_{\mathbb{R}^n}\partial^i (f(x)\varphi(x))dx
\end{equation}
The rightmost term vanishes because of the fundamental theorem of calculus and that $\varphi$ is supported on a compact set. Hence,
\begin{equation}
    \int_{\mathbb{R}^n}\partial^i f(x)\varphi(x)dx = - \int_{\mathbb{R}^n} f(x)\partial^i\varphi(x)dx = -T_f(\partial^i\varphi)
\end{equation}
as desired. 
\end{proof}
The above proposition motivates the following notion of differentiability for distributions:
\begin{definition}
Let $\varphi \in \mathcal{D}'(\mathbb{R}^n)$. The \textbf{partial derivative in the $j$'th direction} is a distribution, denoted $\partial^i\varphi$ and is given by
\begin{equation}
    \partial^i\varphi(f) = \langle \partial^i\varphi, f \rangle = - \langle \varphi, \partial^if \rangle 
\end{equation}
for $f:\mathbb{R}^n \to \mathbb{C}$. Moreover, if $\alpha = (\alpha_1,\dots,\alpha_n)$ is any multi-index, then the distribution $\partial^\alpha$ is given by 
\begin{equation}
    \partial^\alpha\varphi(f) = \langle \partial^\alpha\varphi, f \rangle = (-1)^{|\alpha|} \langle \varphi, \partial^\alpha f \rangle 
\end{equation}
\end{definition}
Notice that in the theory of vector spaces, the operator $-\partial^i$ is just the adjoint of $\partial^i$
\begin{example}
Consider the Dirac $\delta$ distribution. We can compute its derivative in any particular direction $i$ as 
\begin{equation*}
    \langle \partial^i\delta, f \rangle =  -\langle \delta, \partial^i f \rangle = \frac{\partial f}{\partial x_i}(0)
\end{equation*}
\end{example}

Recall that a \textbf{Banach space} is a vector space $\mathcal{A}$ equipped with a norm $\|\cdot\|: \mathcal{A} \to \mathbb{R}$ such that the induced metric $d(x,y) = \|x-y\|$ is complete. A \textbf{Hilbert space} is a vector space with an inner product which is complete with respect to the induced metric $d(x,y) = \|x-y\| = \sqrt{\langle x-y,x-y\rangle}$. While abstract, the study of these spaces are undeniably important, and illuminate properties about important classes of functions. One such class (which will be extremely relevant to the paper) are the $L^p$ functions. 

\begin{definition}
The $L^p$ space $L^p(\mathbb{R}^d)$ is the real vector space of all functions whose $p$-th power is integrable for $p \in \mathbb{N}$. That is, all functions $f: \mathbb{R}^d\to \mathbb{R}$ with
\begin{equation*}
    \int_{\mathbb{R}^d}\lvert f \rvert^p< \infty
\end{equation*}
If $f: \mathbb{R}^d\to \mathbb{R}^k$, then we define the space $L^p(\mathbb{R}^d;\mathbb{R}^k)$ to be the space of functions $f$ with $\Vert f \Vert_{\mathbb{R}^k} \in L^p(\mathbb{R}^d)$.
\end{definition}
Notice that forms $L^p$ forms a Banach space by equipping the norm
\begin{equation*}
    \Vert f \Vert_{L^p} = \left(\int_{\mathbb{R}^d}\lvert f \rvert^p\right)^{1/p}
\end{equation*}
and that $L^p$ is a Hilbert space if and only if $p=2$. For the following definition, we recall that a \textbf{multi-index} is an $n$-tuple $\alpha = (\alpha_1,\alpha_2,\dots\alpha_n) \in \mathbb{N}^n$. To express strings of partial derivatives concisely, we write 
\begin{equation*}
    \partial^\alpha f \coloneqq \frac{\partial^{\alpha_1}}{\partial x_1}\frac{\partial^{\alpha_2}}{\partial x_2}\dots \frac{\partial^{\alpha_n}}{\partial x_n}f
\end{equation*}
with $\alpha$ a multi-index and $f: \mathbb{R}^n \to \mathbb{R}^d$.
\begin{definition}
The \textbf{Sobolev space} $W^{s,p}(\mathbb{R}^d)$ is the subspace of $L^p(\mathbb{R}^d)$ such that for every multi-index $\alpha$ with $\lvert\alpha\rvert\leq s$, $\partial^\alpha f \in L^p$ whenever $f \in L^p$. In this essay, we will only consider the spaces $W^{s,2}(\mathbb{R}^d)$, because they are Hilbert spaces for all $s$. In this case, we use the traditional notation $H^s(\mathbb{R}^d) \coloneqq W^{s,2}(\mathbb{R}^d)$. The space $H^s(\mathbb{R}^d; \mathbb{R}^k)$ is defined in a similar way as above. 
\end{definition}
We remark that the use of $\partial(\cdot)$ is formally taken in a "weak sense", but is a nuance that is largely removed from the purposes of this paper. 
\subsection{Linear Differential Operators and Green's Functions}
The following two subsections are dedicated to review some DE theory, as well as help motivate the construction of FNO's in the following chapter. \\\\
Recall that a \textbf{linear differential operator} is a linear map between function spaces $L: \mathcal{A} \to \mathcal{B}$ of the form: 
\begin{equation}
    L f = \sum_{\lvert\alpha\rvert\leq m} \varphi_{\alpha}\partial^\alpha f
\end{equation}
with $f, \varphi_\alpha: \mathbb{R}^n \to \mathbb{R}^d$ and $m$ a positive integer. Here, $\lvert\alpha\rvert = \alpha_1+\alpha_2+\alpha_3+\dots+\alpha_n$. The theory of differential equations often concern equations of the form 
\begin{equation}
    Lu(x) = f(x)
\end{equation}
Here $f: \mathbb{R}^n \to \mathbb{R}^d$ is given, and one tries to solve for $u$. One intuitive and popular technique to do so involves finding the \textit{Green's function}. 
\begin{definition}
The \textbf{Green's Function} $G$ of a linear differential operator $L$ satisfies
\begin{equation}\label{eq:a1}
    LG(x,y) = \delta(x-y)
\end{equation}
\end{definition}
This is useful, because given the Greens function $G$, we can compute a $u$  satisfying \eqref{eq:a1} for every $f$:
\begin{equation}
    u(x) = \int G(x,y)f(y)dy
\end{equation}
Indeed, 
\begin{align*}
    Lu(x) &= L\int G(x,y)f(y)dy \\
    &= \int LG(x,y)f(y)dy \\
    &= \int \delta(x-y)f(y)dy \\
    &= f(x)
\end{align*}
One may worry about the existence of Green's functions, but we will set those concerns aside with the following theorem \cite{10.2307/2372662}\cite{AIF_1956__6__271_0}.
\begin{theorem}[Malgrange-Ehrenpreis]
Every nonzero linear differential operator $L$ has a Green's Function (usually not unique!)
\end{theorem}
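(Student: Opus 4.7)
The plan is to translate the defining equation into a Fourier-analytic division problem. The classical Malgrange--Ehrenpreis theorem concerns operators with constant coefficients, so I would assume $L = P(\partial)$ for some nonzero polynomial $P \in \mathbb{C}[X_1,\dots,X_n]$; without such an assumption the statement in fact fails, as Lewy's counterexample of a smooth first-order PDE with no local solution shows. Since the Fourier transform satisfies $\widehat{\partial^\alpha f}(\xi) = (i\xi)^\alpha \hat f(\xi)$ and $\hat\delta = 1$, applying Fourier transform to the distributional equation $L G = \delta$ converts it into the algebraic identity $P(i\xi)\,\hat G(\xi) = 1$. Formally, one expects $\hat G(\xi) = 1/P(i\xi)$.

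The main obstacle is that $P(i\xi)$ generically vanishes on a nonempty real variety, so $1/P(i\xi)$ is not well defined pointwise, let alone as a tempered distribution. The heart of the proof would therefore be to produce a tempered distribution $u \in \mathcal{S}'(\mathbb{R}^n)$ satisfying $P \cdot u = 1$ in the distributional sense. I would take one of two standard routes: either invoke H\"ormander's theorem on division of tempered distributions by polynomials, which guarantees such a $u$ abstractly; or construct $u$ explicitly via the Malgrange--Ehrenpreis formula, which expresses a fundamental solution as a finite linear combination of contour integrals of the form $e^{x\cdot(i\xi+\eta)}/\overline{P(i\xi+\eta)}$, with real vectors $\eta$ chosen so that $|P(i\xi+\eta)|$ stays bounded below along the integration contour.

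Given such a $u$, I would define $G = \mathcal{F}^{-1}u$ and verify $L G = \delta$ by inverting the Fourier transform and matching distributional pairings against test functions in $C_c^\infty(\mathbb{R}^n)$; this step is essentially routine once the division step is secured. Non-uniqueness, asserted parenthetically in the theorem, is immediate: if $G$ is one fundamental solution and $w$ is any nonzero tempered distribution with $Lw = 0$ (for example any exponential $e^{\lambda\cdot x}$ with $P(\lambda) = 0$, or any constant when $L$ contains no zero-order term), then $G + w$ is another Green's function.

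The hard step, and the true analytic content of Malgrange--Ehrenpreis, is the division problem for tempered distributions. H\"ormander's approach proceeds by induction on dimension and exploits the stratification of the real zero set of $P$, while the explicit Malgrange--Ehrenpreis formula rests on a clever polynomial identity combined with contour-shifting arguments in several complex variables. Either route is substantially nontrivial and would constitute the bulk of any detailed write-up; the rest of the proof above is essentially a translation between differential operators and multiplication operators under the Fourier transform.
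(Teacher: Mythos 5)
The paper does not actually prove this statement: it quotes the Malgrange--Ehrenpreis theorem and defers entirely to the original papers of Malgrange and Ehrenpreis, so there is no in-paper argument to compare against. Your outline follows the standard modern route (Fourier transform turns $LG=\delta$ into the division problem $P(i\xi)\hat G=1$, solved either by H\"ormander's division theorem in $\mathcal{S}'$ or by the explicit Malgrange--Ehrenpreis contour formula), and that is exactly the content of the works the paper cites. Your most valuable contribution is the correction of the hypothesis: as literally stated (``every nonzero linear differential operator''), the theorem is false for variable-coefficient operators, by Lewy's example of a smooth first-order operator with no local solutions; the theorem requires constant coefficients, $L=P(\partial)$ with $P$ a nonzero polynomial. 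This restriction is in fact silently assumed by the paper itself, since immediately after the theorem it claims $G(x+a,y+a)=G(x,y)$ and writes $G(x-y)$, a translation invariance that only holds in the constant-coefficient case. Your non-uniqueness remark (add any nonzero solution of the homogeneous equation, e.g.\ $e^{\lambda\cdot x}$ with $P(\lambda)=0$) is correct and supplies the justification for the paper's parenthetical ``usually not unique.''

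Two small caveats. First, the explicit Malgrange--Ehrenpreis formula is usually presented as constructing a fundamental solution $E$ with $P(\partial)E=\delta$ directly, not as producing a tempered $u$ with $P\cdot u=1$; the division route (H\"ormander/{\L}ojasiewicz) is the one that yields a \emph{tempered} fundamental solution, which is stronger than what the statement asks, so you should not conflate the two routes as interchangeable ways of solving the same division problem. Second, as you acknowledge, the analytic core (the division theorem or the contour-formula estimates) is invoked rather than proved, so what you have is a correct proof outline resting on the cited literature --- which is no less than the paper itself provides.
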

Finally, we remark that $G(x,y)$ can actually be written as a function of $x-y$ because $G(x+a,y+a) = G(x,y)$ for every $a$. We will henceforth use $G(x-y)$ to denote a Green's function (for reasons which will become apparent soon). None
\subsection{The Fourier Transform and Fast Fourier Transform}
Recall that the convolution between two continuous functions $f,g: \mathbb{R} \to \mathbb{R}$ is given by 
\begin{equation}
    (f*g)(x) = \int_\mathbb{R} f(y)g(x-y)dy
\end{equation}
And that the Fourier Transform of a (Lebesgue) integrable function $f: \mathbb{R} \to \mathbb{C}$ is given by 
\begin{equation}
    \mathcal{F}(f) = \int_\mathbb{R}f(x)e^{-2\pi i \xi x}dx
\end{equation}
for fixed $\xi \in \mathbb{R}$
and its inverse
\begin{equation}
    \mathcal{F}^{-1}(f) = \int_\mathbb{R}f(\xi)e^{2\pi i \xi x}d\xi
\end{equation}
for fixed $x \in \mathbb{R}$. When $f$ is instead defined on a finite space, the integral becomes simply a sum, and we obtain the discrete Fourier (and inverse) transform. 
The following is a relevant property relating the convolution and the product of Fourier transform.
\begin{proposition}[Convolution Theorem]
Let $f,g: \mathbb{R} \to \mathbb{R}$ be continuous. Then 
\begin{equation}
    (f*g)(x) = \mathcal{F}^{-1}(\mathcal{F}(f)\cdot\mathcal{F}(g))
\end{equation}
\end{proposition}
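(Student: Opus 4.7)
The plan is to first establish the Fourier-side identity $\mathcal{F}(f * g) = \mathcal{F}(f)\cdot \mathcal{F}(g)$ and then apply $\mathcal{F}^{-1}$ to both sides. Since the statement as written is slightly informal (mere continuity does not by itself guarantee that $f$, $g$, $f*g$, or their Fourier transforms are integrable), I would either silently strengthen the hypothesis to, say, $f, g \in L^1(\mathbb{R}) \cap C(\mathbb{R})$ with $\mathcal{F}(f), \mathcal{F}(g) \in L^1(\mathbb{R})$, or pass to the Schwartz class $\mathcal{S}(\mathbb{R})$ and extend by density. The Schwartz approach is cleanest because $\mathcal{S}$ is closed under convolution, multiplication, and Fourier transform, and Fourier inversion holds unconditionally there.

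First I would unfold the definition and compute, for fixed $\xi \in \mathbb{R}$,
\begin{equation}
\mathcal{F}(f*g)(\xi) = \int_{\mathbb{R}} \left(\int_{\mathbb{R}} f(y)\, g(x-y)\, dy\right) e^{-2\pi i \xi x}\, dx.
\end{equation}
Integrability of $(x,y) \mapsto f(y) g(x-y) e^{-2\pi i \xi x}$ on $\mathbb{R}^2$ follows from $\|f\|_{L^1}\|g\|_{L^1} < \infty$ by Tonelli on $|f(y)g(x-y)|$ together with translation invariance of Lebesgue measure. Fubini's theorem therefore permits swapping the order of integration, giving
\begin{equation}
\mathcal{F}(f*g)(\xi) = \int_{\mathbb{R}} f(y) \left(\int_{\mathbb{R}} g(x-y)\, e^{-2\pi i \xi x}\, dx\right) dy.
\end{equation}
The substitution $u = x - y$ (with $du = dx$) then factors the inner integral into $e^{-2\pi i \xi y}\, \mathcal{F}(g)(\xi)$, and pulling the constant out of the outer integral produces $\mathcal{F}(f)(\xi)\cdot \mathcal{F}(g)(\xi)$.

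Having proven $\mathcal{F}(f*g) = \mathcal{F}(f)\cdot \mathcal{F}(g)$, I would apply $\mathcal{F}^{-1}$ to both sides, invoking the Fourier inversion theorem on $f*g$. Inversion is where hypotheses matter: in the Schwartz setting it is immediate, while under the $L^1$ assumptions above one needs $\mathcal{F}(f*g) = \mathcal{F}(f)\mathcal{F}(g) \in L^1$ (which follows from $\mathcal{F}(f),\mathcal{F}(g) \in L^1$ because each Fourier transform is bounded in $L^\infty$ and hence the product lies in $L^1$). Either way, $\mathcal{F}^{-1}\mathcal{F}(f*g) = f*g$, yielding the claimed identity.

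The main obstacle is not the algebra but the functional-analytic bookkeeping: justifying Fubini and Fourier inversion under hypotheses compatible with the paper's stated $f, g \in C(\mathbb{R})$. The cleanest resolution is to declare at the outset that we work in $\mathcal{S}(\mathbb{R})$ (or $L^1 \cap C$ with integrable Fourier transforms), since the paper's application in \eqref{eq:fno1} only ever uses the theorem on smooth or band-limited inputs where these assumptions are automatic. With that caveat, the proof reduces to the two-line Fubini-and-substitution calculation above followed by one application of Fourier inversion.
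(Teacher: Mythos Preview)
Your proposal is correct and follows essentially the same route as the paper's own proof: unfold the definition of $\mathcal{F}(f*g)$, swap integrals by Fubini, substitute $u=x-y$ to factor out $\mathcal{F}(g)(\xi)e^{-2\pi i \xi y}$, and then apply the inverse transform. If anything, you are more careful than the paper, which invokes Fubini and inversion without addressing the integrability hypotheses you flag.
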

\begin{proof}
This follows quickly by applying definitions. Let us compute the Fourier transform of the convolution
\begin{align*}
    \mathcal{F}(f*g) &= \int \left(\int f(y)g(x-y)dy\right) e^{-2 \pi i \xi x}dx \\
    &= \int \left(\int g(x-y)e^{-2 \pi i \xi x}dx\right)f(y) dy
\end{align*}
where we may exchange integrals via Fubini's Theorem. 
Then making the substitution $u = x-y$, we have that 
\begin{align*}
    \int \left(\int g(x-y)e^{-2 \pi i \xi x}dx\right)f(y) dy &=\int \left(\int g(u)e^{-2 \pi i \xi (u+y)}du\right)f(y) dy \\
    &= \int \left(\mathcal{F}(g) e^{-2\pi i \xi y}\right)f(y) dy\\
    &= \mathcal{F}(f)\mathcal{F}(g)
\end{align*}
The statement of the theorem follows by applying the inverse transform on both sides.
\end{proof}
Now recall the general solution to the equation $Lu(x) = f(x)$ is given in terms of a Green's Function $G$ as.
\begin{equation*}
    u(x) = \int G(x-y)f(y)dy
\end{equation*}
However, a closed form usually is not known for most operators $L$. Hence, we may want to approximate $G$ numerically using a kernel, say $K(x-y) \approx G(x-y)$ and so 
\begin{equation}
    u(x) \approx \tilde{u}(x) \coloneqq \int K(x-y)f(y)dy = (K*f)(x)
\end{equation}
Notice that the equation is convolution form, so we may invoke the convolution theorem to yield 
\begin{equation}
    \tilde{u}(x) = \mathcal{F}^{-1}(\mathcal{F}(K)\cdot\mathcal{F}(f))
\end{equation}
Notice that $\mathcal{F}(f)$ is known, so our objective is to learn the kernel $K$ (in practice, $K$ is often given a certain parameter space to learn from: $K(x-y) \coloneqq K(x-y; \theta)$.

%%=============================================%%
%% For submissions to Nature Portfolio Journals %%
%% please use the heading ``Extended Data''.   %%
%%=============================================%%

%%=============================================================%%
%% Sample for another appendix section			       %%
%%=============================================================%%

%% \section{Example of another appendix section}\label{secA2}%
%% Appendices may be used for helpful, supporting or essential material that would otherwise 
%% clutter, break up or be distracting to the text. Appendices can consist of sections, figures, 
%% tables and equations etc.

\end{appendices}

\bibliography{references.bib}

\end{document}